\newenvironment{proof-sketch}{{\noindent{\emph{Proof (Sketch).}}}}
\newcommand{\Const}{\mathsf{Const}}
\newcommand{\perfectRef}{\mathsf{PerfectRef}}
\newcommand{\mapRef}{\mathsf{MapRef}}
\newcommand{\rewr}{\mathsf{rew}}
\newcommand{\per}{\mbox{\bf .}}
\newcommand{\body}{\mathit{set}}
\newcommand{\query}{\mathit{query}}
\newcommand{\LOGSPACE}{\textsc{LogSpace}\xspace}
\newcommand{\NEXPTIME}{\textsc{NExpTime}\xspace}
\newcommand{\coNEXPTIME}{\textsc{coNExpTime}\xspace}
\newcommand{\NP}{\textrm{NP}\xspace}
\newcommand{\DP}{\textrm{DP}\xspace}
\newcommand{\coNP}{\textrm{coNP}\xspace}
\newcommand{\Tetwop}{{\Theta_2^p}\xspace}
\newcommand{\myi}{(\emph{i})\xspace}
\newcommand{\myii}{(\emph{ii})\xspace}
\newcommand{\myiii}{(\emph{iii})\xspace}
\newcommand{\myiv}{(\emph{iv})\xspace}
\newcommand{\myv}{(\emph{v})\xspace}
\newcommand{\C}{\mathcal{C}}
\newcommand{\I}{\mathcal{I}}
\renewcommand{\L}{\mathcal{L}}
\newcommand{\M}{\mathcal{M}}
\renewcommand{\O}{\mathcal{O}}
\newcommand{\Q}{\mathcal{Q}}
\newcommand{\R}{\mathcal{R}}
\renewcommand{\S}{\mathcal{S}}
\newcommand{\V}{\mathcal{V}}
\newcommand{\ISA}{\sqsubseteq}
\newcommand{\dllite}{\textit{DL-Lite}\xspace}
\newcommand{\dlliter}{\textit{DL-Lite}_{\R}\xspace}
\newcommand{\owltwoql}{\texttt{OWL\,2\,QL}\xspace}
\newcommand{\owltwo}{\texttt{OWL\,2}\xspace}
\newcommand{\tup}[1]{\langle#1\rangle}
\newcommand{\cert}{\mathit{cert}}
\newcommand{\wrt}{w.r.t.\xspace}
\newcommand{\dom}{\mathit{dom}}
\begin{document}

\title{QDEF and Its Approximations in OBDM}

\titlerunning{QDEF and Its Approximations in OBDM}

\author{Gianluca Cima\inst{1}\orcidID{0000-0003-1783-5605} \and
Federico Croce\inst{2}\orcidID{0000-0001-6779-4624} \and
Maurizio Lenzerini\inst{2}\orcidID{0000-0003-2875-6187}}
\authorrunning{G. Cima et al.}

\institute{CNRS \& University of Bordeaux, Bordeaux, France\\
\email{gianluca.cima@u-bordeaux.fr}\\
\and
Sapienza University of Rome, Rome, Italy\\
\email{\{croce, lenzerini\}@diag.uniroma1.it}}

\sloppy

\maketitle

\begin{abstract}
    Given an input dataset (i.e., a set of tuples), query definability in Ontology-based Data Management (OBDM) amounts to find a query over the ontology whose certain answers coincide with the tuples in the given dataset. We refer to such a query as a \emph{characterization} of the dataset with respect to the OBDM system. Our first contribution is to propose approximations of perfect characterizations in terms of recall (complete characterizations) and precision (sound characterizations). A second contribution is to present a thorough complexity analysis of three computational problems, namely verification (check whether a given query is a perfect, or an approximated characterization of a given dataset), existence (check whether a perfect, or a best approximated characterization of a given dataset exists), and computation (compute a perfect, or best approximated characterization of a given dataset).    
\end{abstract}


\section{Introduction}
As first introduced for relational databases~\cite{Zloof75,TTCYS14,BCS16,BR17}, query definability is the reverse engineering task that, given a set of tuples and a database, aims at finding a query whose answers over such database are exactly the tuples in the set. In other words, the goal of this task is to derive an intensional definition (the query) of an extensionally defined set. Over the years, researchers have found several interesting applications of this problem, spanning from simplifying query formulation by non-experts, to debugging facilities for data engineers. Moreover, the query definability has been studied as a useful tool for data exploration, data analysis, usability, data security and more~\cite{MM2019,MLVP2017}. With the rise of Machine Learning (ML), we argue that this topic could be also beneficial for providing meaningful reformulations of what is called a training dataset in any typical supervised ML-based classification task. In this context, the training set used in a classification task is seen as a set of tuples in a database schema, and the query derived by solving the query definability problem results into an intensional definition of the input training set. In a sense, the expression derived can be used as an explanation of the intensional properties of the training set. The idea is that an intensional characterization of the training set can help understanding the behaviour of a classifier, a very important task for wide and safe adoption of machine learning and data mining technologies, especially in dealing with bias.

In this paper, we address the problem of query definability in the context of Ontology-based Data Management (OBDM), which is a paradigm for accessing data using a conceptual representation of the domain of interest expressed as an ontology. OBDM relies on a three-level architecture, consisting of the schema of the data layer $\S$ (which we assume constituted by a relational database), the ontology $\O$, a declarative and explicit representation of the domain of interest for the organization, and the mapping $\M$ between the two. Consequently, an OBDM specification is formalized as the triple $J=\tup{\O,\S,\M}$ which, together with an $\S$-database $D$, form a so-called OBDM system $\Sigma = \tup{J,D}$. In this context, we are going to tackle the problem of query definability by leveraging the notion of evaluation of a query with respect to an OBDM system, in turn based on the notion of \textit{certain answers} to a query over an OBDM system.
Intuitively, given an OBDM system $\Sigma=\tup{J,D}$ and a $D$-dataset $\lambda$, our goal is to derive a query expression over $\O$ that suitably \emph{characterizes}
$\lambda$ \wrt\ $\Sigma$. In other words, we aim at deriving a ``good'' definition of $\lambda$ using a query expressed over the concepts and roles of the ontology of $J$.

Inspired by the works in~\cite{BJS18,Ortiz19} about query definability in Description Logics (DLs), we consider the query whose certain answers with respect to $\Sigma=\tup{J,D}$ is exactly $\lambda$ as the perfect characterization for $\lambda$. We note that, since in this paper we tackle the query definability problem in OBDM, differently from the works in~\cite{BJS18,Ortiz19}, this work has the added complexity of considering the mapping layer of the OBDM system, which is, to the best of our knowledge, novel to this field.
This work has also been inspired by the \textit{concept learning} tools presented in~\cite{BLWB18,FRAE2018,SM15}, and by the notion of \textit{query abstraction} \cite{CiLP19,Cima20,CCCLP21}. We differ from the former because in that work the goal is to learn a concept expression capturing a given dataset, whereas our goal is to derive a full-blown query that, evaluated over the ontology, returns the dataset as answers. We differ from the latter because, although the goal is stil to derive a query expression over the ontology, in that work the input is a query over the data layer, whereas in query definability the input is a set of tuples. It follows that the two tasks are completely different and require different technical solutions: in the present work we aim at finding a query over the ontology such that the certain answers of the query \wrt the OBDM system are equal to the given specific dataset, whereas in \cite{CiLP19,Cima20,CCCLP21}, the goal is to find a query over the ontology such that the certain answers of the query are equal to the evaluation of the given query over the database schema, for all possible databases of the OBDM system. In the framework section of this paper we will better characterize the relationship between the two notions of query definability and query abstraction.

Virtually all the above-mentioned works point out that in
many cases a perfect ontological characterization of a given dataset does not exist. We argue that, in these cases, reasonable and useful ontological characterizations can still be provided. In particular, we propose to resort to suitable approximations of the perfect
characterizations, in terms of recall and precision. To this end, we
introduce the notions of sound and complete characterizations. The former is
a query whose certain answers form a subset of the $D$-dataset $\lambda$ in
input, whereas the certain answers of the latter, form a superset of
the $D$-dataset $\lambda$. Obviously, we are interested in computing the best
approximated characterizations, which we call maximally sound and minimally
complete characterizations, respectively. A maximally sound (resp.,
minimally complete) characterization is a sound (resp., complete) characterization such that no other sound (resp., complete)
characterization exists that better approximates the $D$-dataset $\lambda$.

This paper provides the following contributions. 
\begin{itemize}
\item We present a general, formal framework for the various notions of
  ontological characterizations mentioned above. 
  The framework includes the definition of three tasks that are relevant for reasoning about characterizations of a dataset,
  namely verification (verify whether a given query is a sound,
  complete, or perfect characterizations), computation (compute a
  characterization of a certain type), and existence (check whether a
  characterization of a certain type exists).
\item We provide computational complexity results for the three
  reasoning tasks mentioned above in a scenario that uses the most common languages in the OBDM literature, namely where the ontology language is $\dlliter$, the mapping language is GLAV, and the query language to express characterizations is the one of union of conjunctive queries. As for the two decision problems of
  verification and existence, we provide both upper bounds and
  matching lower bounds. As for the computation task, we provide
  algorithms for computing perfect, minimally complete, and maximally sound characterizations, provided they exist.
\end{itemize}

The paper is organized as follows. After the preliminaries in Section~\ref{sec:Preliminaries}, Section~\ref{sec:Framework} illustrates the framework, and Sections~\ref{sec:Verification}, \ref{sec:Computation}, and~\ref{sec:Existence} present the
results on the three reasoning tasks, i.e., verification, computation
and existence, respectively. Finally, Section~\ref{sec:Conclusion} concludes the paper by
discussing possible future work.


\section{Preliminaries}\label{sec:Preliminaries}

We recall some notations and languages about relational databases~\cite{AbHV95}, Description Logics (DLs)~\cite{BCMNP03}, and the Ontology-based Data Management (OBDM) paradigm~\cite{Lenz11}.

\paragraph{Databases, Datasets, and Queries:}

A \emph{relational database schema} (or simply \emph{schema}) $\S$ is a finite set of predicate symbols, each with a specific arity. Given a schema $\S$, an \emph{$\S$-database} $D$ is a finite set of \emph{facts} satisfying all integrity constraints in $\S$ whose form is $s(\vec{c})$, where $s$ is an $n$-ary predicate symbol of $\S$, and $\vec{c}=(c_1,\ldots,c_n)$ is an $n$-tuple of constants, each taken from a countable infinite set of symbols denoted by $\Const$. We denote by $\dom(D)$ the finite set of constants occurring in $D$. Observe that $\dom(D)\subseteq \Const$.

Given a schema $\S$ and an $\S$-database $D$, a \emph{$D$-dataset} $\lambda$ of arity $n$ is a finite set of $n$-tuples $\vec{c}$ of constants occurring in $D$, i.e., $\lambda \subseteq \dom(D)^n$.

A \emph{query} $q_{\S}$ over a schema $\S$ is an expression in a certain query language $\Q$ using the predicate symbols of $\S$ and arguments of predicates are \emph{variables}, i.e., we disallow constants to occur in queries. Each query has an associated arity. The \emph{evaluation} of a query $q_{\S}$ of arity $n$ over an $\S$-databases $D$ is a set of \emph{answers} $q_{\S}^D$, each answer being an $n$-tuple of constants occurring in $\dom(D)$, i.e., $q_{\S}^D \subseteq \dom(D)^n$. We are particularly interested in \emph{conjunctive queries} and unions thereof.

A \emph{conjunctive query (CQ)} over a schema $\S$ is an expression of the form $q_{\S}= \{\vec{x} \mid \exists \vec{y} \per \phi(\vec{x},\vec{y})\}$ such that \myi $\vec{x}=(x_1,\ldots,x_n)$, called the \emph{target list} of $q_{\S}$, is an $n$-tuple of \emph{distinguished variables}, where $n$ is the arity $q_{\S}$ \myii $\vec{y}=(y_1,\ldots,y_m)$ is an $m$-tuple of \emph{existential variables}; and \myiii $\phi(\vec{x},\vec{y})$, called the \emph{body} of $q_{\S}$, is a finite conjunction of atoms of the form $s(v_1,\ldots,v_p)$, where $s$ is a $p$-ary predicate symbol of $\S$ and $v_i$ is either a distinguished or an existential variable, i.e., $v_i \in \vec{x} \cup \vec{y}$, for each $i=[1,p]$. 
Variables belong to a countable infinite set of symbols denoted by $\V$, where $\Const \cap \V = \emptyset$. A \emph{union of conjunctive queries (UCQ)} is a finite set of CQs with same arity, called its \emph{disjuncts}.

For a conjunction of atoms $\phi(\vec{x},\vec{y})$, we denote by $\body(\phi)$ the set of all the atoms occurring in $\phi$. For a set of atoms $\C$ and a tuple $\vec{c}=(c_1,\ldots,c_n)$ of constants, 
we denote by $\query(\C,\vec{c})$ the CQ $\{\vec{x} \mid \exists \vec{y} \per \phi(\vec{x},\vec{y})\}$, where \myi $\phi(\vec{x},\vec{y})$ is the conjunction of all the atoms occurring in the set of atoms $\C'$, where $\C'$ is obtained from $\C$ by replacing everywhere each constant $c_i$ occurring in $\vec{c}$ with a fresh variable $x_{c_i}$ and each constant $c$ not occurring in $\vec{c}$ with a fresh variable $y_c$, \myii $\vec{x}=(x_{c_1},\ldots,x_{c_n})$, and \myiii $\vec{y}$ is the tuple of all variables occurring in $\C'$ that do not occur in $\vec{x}$. 

Following the terminology of~\cite{CaDa15}, we say that a query $q_{\S}$ over a schema $\S$ \emph{defines a $D$-dataset $\lambda$ inside an $\S$-database $D$} if $q_{\S}^{D}=\lambda$, and say that $\lambda$ is \emph{$\Q$-definable inside $D$}, for a query language $\Q$, if there exists a query $q_{\S} \in \Q$ that defines $\lambda$ inside $D$.

Given a set of atoms $\C$, we denote by $\dom(\C)$ the set of all constants and variables occurring in a set of atoms $\C$. Observe that $\dom(\C) \subseteq \Const \cup \V$. Let $\C_1$ and $\C_2$ be two sets of atoms. We say that a function $h: \dom(\C_1) \rightarrow \dom(\C_2)$ is a \emph{homomorphism} from $\C_1$ to $\C_2$ if $h(\C_1) \subseteq h(\C_2)$, where $h(\C_1)$ is the image of $\C_1$ under $h$, i.e., $h(\C_1)=\{h(\alpha) \mid \alpha \in \C_1\}$ with $h(s(t_1,\ldots,t_n)) = s(h(t_1),\ldots,h(t_n))$ for each atom $\alpha = s(t_1,\ldots,t_n)$. For two sets of atoms $\C_1$ and $\C_2$ and two tuples of terms $\vec{t_1}$ and $\vec{t_2}$, we write $(\C_1, \vec{t_1}) \rightarrow (\C_2,\vec{t_2})$ if there is a 
function $h$ from $\dom(\C_1) \cup \vec{t_1}$ to $\dom(\C_2) \cup \vec{t_2}$ such that \myi $h$ is a homomorphism from $\C_1$ to $\C_2$, and \myii $h(\vec{t_1})=\vec{t_2}$ (where, for a tuple of terms $\vec{t}=(t_1,\ldots,t_n)$, $h(\vec{t})=(h(t_1),\ldots,h(t_n))$), $(\C_1, \vec{t_1}) \not\rightarrow (\C_2,\vec{t_2})$ otherwise.

Observe that for an $\S$-database $D$ and a CQ $q_{\S}=\{\vec{x} \mid \exists \vec{y} \per \phi(\vec{x},\vec{y})\}$ over $\S$ of arity $n$, the set of answers $q_{\S}^{D}$ corresponds to the set of $n$-tuples $\vec{c}$ of constants occurring in $D$ for which $(\body(\phi),\vec{x}) \rightarrow (D,\vec{c})$.

\paragraph{Syntax and Semantics of $\dlliter$:}
DLs are fragments of First-order logic languages using only unary and binary predicates, called \emph{atomic concepts} and \emph{atomic roles}, respectively. In this paper, a \emph{DL ontology} (or simply \emph{ontology}) $\O$ is a TBox (``Terminological Box'') expressed in a specific DL, that is, a set of assertions stating general properties of concepts and roles built according to the syntax of the specific DL, which represents the intensional knowledge of a modeled domain.

We are interested in DL ontologies expressed in $\dlliter$, the member of the $\dllite$ family~\cite{CDLLR07} that underpins $\owltwoql$, i.e., the $\owltwo$ profile especially designed for efficient query answering~\cite{W3Crec-OWL2-Profiles}. A $\dlliter$ ontology $\O$ is a finite set of \emph{assertions} of the form:
$$
    \begin{array}[t]{lll}
        B_1 \ISA B_2 & \quad R_1 \ISA R_2 & \quad \mbox{(concept/role inclusion)}\\[1mm]
        B_1 \ISA \neg B_2 &\quad R_1 \ISA \neg R_2 &\quad \mbox{(concept/role disjointness)}
    \end{array}
$$
where $B_1$, $B_2$ are basic concepts, i.e., expressions of the form $A$, $\exists P$, or $\exists P^-$, with $A$ and $P$ an atomic concept (atomic concepts include the universal one $\top$ and the bottom one $\bot$) and an atomic role, respectively, and $R_1$ and $R_2$ basic roles, i.e., expressions of the form $P$, or $P^-$.

Given a $\dlliter$ ontology $\O$, we denote by $V_{\O}$ the \emph{$\O$-violation query}, i.e., the boolean UCQ obtained by including a disjunct of the form $\{() \mid \exists y \per A_1(y) \wedge A_2(y)\}$ (respectively, $\{() \mid \exists y_1,y_2 \per A_1(y_1) \wedge R(y_1,y_2)\}$, $\{() \mid \exists y_1, y_2, y_3 \per R_1(y_1,y_2) \wedge R_2(y_1,y_3)\}$, and $\{() \mid \exists y_1, y_2 \per R_1(y_1,y_2) \wedge R_2(y_1,y_2)\}$) for each disjointness assertion $A_1 \ISA \neg A_2$ (respectively, $A_1 \ISA \neg \exists R$ or $\exists R \ISA \neg A_1$, $\exists R_1 \ISA \neg \exists R_2$, and $R_1 \ISA \neg R_2$) occurring in $\O$, where an atom of the form $R(y,y')$ stands for either $P(y,y')$ if $R$ denotes an atomic role $P$, or $P(y',y)$ if $R$ denotes the inverse of an atomic role, i.e., $R=P^-$.

The semantics of DL ontologies is specified through the notion of interpretation: an \emph{interpretation} $\I$ for an ontology $\O$ is a pair $\I = \tup{\Delta^{\I},\cdot^{\I}}$, where the \emph{interpretation domain} $\Delta^{\I}$ 
is a non-empty, possibly infinite set of constants, and the \emph{interpretation function} $\cdot^{\I}$ assigns to each atomic concept $A$ a set of domain objects $A^{\I}\subseteq\Delta^{\I}$, and to each atomic role $P$ a set of pairs of domain objects $P^{\I}\subseteq\Delta^{\I}\times\Delta^{\I}$. For the constructs of $\dlliter$, the interpretation function extends to other basic concepts and basic roles as follows: $\top^{\I}=\Delta^{\I}$, $\bot^{\I}=\emptyset$, $(\exists P)^{\I}=\{o\mid\exists o'.\ (o,o')\in P^{\I}\}$, and $(P^-)^{\I}=\{(o,o')\mid (o',o)\in P^{\I}\}$.
We often treat interpretations $\I$ for ontologies $\O$ as a (possibly infinite) set of facts over (the atomic concepts and roles in the alphabet of) $\O$.

We say that an interpretation $\I$ for an ontology $\O$ satisfies $\O$, denoted by $\I \models \O$, if $\I$ satisfies every assertion in $\O$. For the $\dlliter$ assertions, an interpretation $\I$ satisfies a concept inclusion assertion $B_1 \ISA B_2$ (respectively, role inclusion assertion $R_1 \ISA R_2$) if $B_1^{\I} \subseteq B_2^{\I}$ (respectively, $R_1^{\I} \subseteq R_2^{\I}$), and it satisfies a concept disjointness assertion $B_1 \ISA \neg B_2$ (respectively, role disjointness assertion $R_1 \ISA \neg R_2$) if $B_1^{\I} \cap B_2^{\I} = \emptyset$ (respectively, $R_1^{\I} \cap R_2^{\I} = \emptyset$).

Whenever we speak about queries $q_{\O}$ over ontologies $\O$, we mean queries in a certain language $\Q$ using the atomic concepts and roles in the alphabet of $\O$ as predicates. For a UCQ $q_{\O}$ over a $\dlliter$ ontology $\O$, we denote by $\perfectRef(\O,q_{\O})$ the UCQ computed by executing the algorithm $\perfectRef$~\cite{CDLLR07} on $\O$ and $q_{\O}$. 

\paragraph{Ontology-based Data Management:} According to~\cite{PLCD*08,Lenz11}, an \emph{Ontology-based Data Management (OBDM)} specification is a triple $J=\tup{\O,\S,\M}$, where $\O$ is a DL ontology, $\S$ is a relational database schema, also called \emph{source schema}, and $\M$ is a \emph{mapping}, i.e., a finite set of assertions over the signature $\S \cup \O$ relating the source schema $\S$ to the ontology $\O$. An OBDM system is a pair $\Sigma=\tup{J,D}$, where $J=\tup{\O,\S,\M}$ is an OBDM specification and $D$ is an $\S$-database.

The semantics of an OBDM system $\Sigma=\tup{\tup{\O,\S,\M},D}$ is given in terms of interpretations $\I=\tup{\Delta^{\I},\cdot^{\I}}$ for $\O$ in which the interpretation function $\cdot^{\I}$ further assigns to each constant $c \in \dom(D)$ a domain object $c \in \Delta^{\I}$. Specifically, we say that an interpretation $\I$ for $\O$ is a \emph{model} of an OBDM system $\Sigma=\tup{\tup{\O,\S,\M},D}$ if \myi $\I \models \O$, and \myii the pair $\tup{\I,D} \models \M$. We say that an OBDM system $\Sigma$ is \emph{consistent} if it has at least one model, \emph{inconsistent} otherwise.

The set of \emph{certain answers} of a query $q_{\O}$ over an ontology $\O$ \wrt\ an OBDM system $\Sigma=\tup{J,D}$ with $J=\tup{\O,\S,\M}$, denoted by $\cert_{q_{\O},J}^{D}$, is the set of tuples of constants $(c_1, \ldots, c_n)$ such that $(c_1^{\I}, \ldots, c_n^{\I}) \in q_{\O}^{\I}$ for each model $\I$ of $\Sigma$, where $\I$ is seen as a set of facts over $\O$. If $\Sigma$ is inconsistent, then the set of certain answers of any query $q_{\O}$ over $\O$ \wrt $\Sigma$ is simply the set of all possible tuples of constants occurring in $D$ whose arity is the one of the query. We say that two queries $q_1$ and $q_2$ are equivalent \wrt an OBDM system $\Sigma=\tup{J,D}$ if $\cert_{q_1,J}^D=\cert_{q_2,J}^D$.

As for the mapping component of an OBDM system, in this paper we are interested in \emph{GLAV} assertions~\cite{DoHI12}, which are assertions of the form
$
    q_{\S} \rightarrow q_{\O},
$
where $q_{\S}$ and $q_{\O}$ are CQs over $\S$ and over $\O$, respectively, with the same target list $\vec{x}=(x_1,\ldots,x_n)$. Special cases of GLAV assertions highly considered in the data integration literature are GAV and LAV assertions~\cite{Lenz02}: in a GAV (resp., LAV) mapping, $q_{\O}$ (resp., $q_{\S}$) is simply an atom without existential variables. A GLAV (resp., GAV, LAV, GAV$\cap$LAV) mapping is a finite set of GLAV (resp., GAV, LAV, both GAV and LAV) assertions.

Given a GLAV mapping $\M$ relating $\S$ to $\O$, an interpretation $\I$ for $\O$, and an $\S$-database $D$, we have that $\tup{\I,D} \models \M$ if $(c_1,\ldots,c_n) \in q_{\S}^D$ implies $(c_1^{\I},\ldots,c_n^{\I}) \in q_{\O}^{\I}$ for each mapping assertion $q_{\S} \rightarrow q_{\O}$ occurring in $\M$ and for each possible tuple $(c_1,\ldots,c_n)$ of constants occurring in $D$. 

Let $J=\tup{\O,\S,\M}$ be an OBDM specification where $\O=\emptyset$, i.e., $\O$ has no assertions, and $\M$ is a GLAV mapping. From results of~\cite{FrLM99,CDLV12}, it is well-known that, given a UCQ $q_{\O}$ over $\O$, by splitting the GLAV mapping $\M$ into a GAV mapping followed by a LAV mapping over an intermediate alphabet, it is always possible to compute a UCQ over $\S$, denoted by $\mapRef(\M,q_{\O})$, such that $\mapRef(\M,q_{\O})^D = \cert_{q_{\O},J}^D$ for each $\S$-database $D$. 

Let now $J=\tup{\O,\S,\M}$ be an OBDM specification where $\O$ is a $\dlliter$ ontology and $\M$ is a GLAV mapping. For a UCQ $q_{\O}$ over $\O$, we denote by $\rewr_{q_{\O},J}$ the following UCQ over $\S$: $\rewr_{q_{\O},J}:=\mapRef(\M,\perfectRef(\O,q_{\O}))$. By combining the above observation with results of~\cite{CDLLR07}, we have that \myi $\cert_{q_{\O},J}^D=\rewr_{q_{\O},J}^D$, for each UCQ $q_{\O}$ over $\O$ and for each $\S$-database $D$ such that $\tup{J,D}$ is consistent, and \myii $\tup{J,D}$ is inconsistent if and only if $\rewr_{V_\O,J}^D=\{\tup{}\}$, for each $\S$-database $D$. We note that $\dlliter$ is insensitive to the adoption of the unique name assumption for UCQ answering~\cite{ACKZ09}.

\paragraph{Canonical Structure:} Given an $\S$-database $D$ and a GLAV mapping $\M$ relating a schema $\S$ to an ontology $\O$, the \emph{chase}~\cite{CaGK13} of $D$ with respect to $\M$, denoted by $\M(D)$, is the set of atoms computed as follows: \myi we start with $\M(D):=\emptyset$; then \myii for every GLAV assertion $\{\vec{x} \mid \exists \vec{y} \per \phi_{\S}(\vec{x},\vec{y})\} \rightarrow \{\vec{x} \mid \exists \vec{z} \per \varphi_{\O}(\vec{x},\vec{z})\}$ in $\M$ and for every tuple of constants $\vec{c}$ such that $(\body(\phi_{\S}),\vec{x}) \rightarrow (D,\vec{c})$, we add to $\M(D)$ the image of the set of atoms $\body(\varphi_{\O})$ under $h'$, that is, $\M(D):=\M(D) \cup h'(\varphi_{\O}(\vec{x},\vec{z}))$, where $h'$ extends $h$ by assigning to each variable $z$ occurring in $\vec{z}$ a different fresh variable of $\V$ still not present in $\dom(\M(D))$. Observe that $\M(D)$ is guaranteed to be finite and can be always computed in exponential time.  

We conclude this section with the following observation used in the technical development of the next sections. 
Let $\Sigma=\tup{\tup{\O,\S,\M},D}$ be an OBDM system where $\O$ is a $\dlliter$ ontology and $\M$ is a GLAV mapping. We call the \emph{canonical structure} of $\O$ with respect to $\M$ and $D$, denoted by $\C_{\O}^{\M(D)}$, the (possibly infinite) set of atoms obtained by first computing $\M(D)$ as described before, and then by chasing $\M(D)$ with respect to the inclusion assertions of $\O$ as described in~\cite[Definition~5]{CDLLR07} but using the alphabet $\V$ of variables whenever a new element is needed in the chase. Observe that this latter is a \emph{fair} deterministic strategy, i.e., it is such that if at some point an assertion is applicable, then it will be eventually applied. By combining results of~\cite[Proposition~4.2]{FKMP05} with~\cite[Theorem~29]{CDLLR07}, it is well-known that, for a UCQ $q_{\O}=\{\vec{x_1} \mid \exists \vec{y_1} \per \phi^1_{\O}(\vec{x_1},\vec{y_1})\} \cup \ldots \cup \{\vec{x_p} \mid \exists \vec{y_p} \per \phi^p_{\O}(\vec{x_p},\vec{y_p})\}$ over $\O$ and a tuple of constants $\vec{c}$, if $\Sigma=\tup{J,D}$ is consistent, then we have $\vec{c} \in \cert_{q_{\O},J}^D$ if and only if $(\body(\phi^i_{\O}),\vec{x_i}) \rightarrow (\C_{\O}^{\M(D)},\vec{c})$ for some $i \in [1,p]$.


\section{Framework}\label{sec:Framework}

In what follows, $\Sigma=\tup{J,D}$ refers to an OBDM system where $J=\tup{\O,\S,\M}$ is an OBDM specification and $D$ is an $\S$-database. Intuitively, given a set $\lambda$ of $n$-tuples of constants occurring in $D$ (i.e., $\lambda$ is a $D$-dataset of arity $n$), we aim at finding a query $q_{\O}$ over $\O$ in a certain query language $\Q$ characterizing $\lambda$ \wrt\ the OBDM system $\Sigma$. Since the evaluation of queries is based on certain answers, we are naturally led to the following definition.

\begin{definition}\label{def:explanation}
    $q_\O \in \Q$ is a \emph{perfect $\Sigma$-characterization} of $\lambda$ in the query language $\Q$, if $cert^D_{q_\O, J} = \lambda$.
\end{definition}

Clearly, if a perfect $\Sigma$-characterization of $\lambda$ exists, then it is unique up to $\Sigma$-equivalence, and therefore in the following we will always refer to \emph{the} perfect $\Sigma$-characterization of $\lambda$ in the query language $\Q$. 

\begin{example}\label{ex:mainexample}
     Let $\Sigma=\langle J,D\rangle$ be as follows. $J=\langle\O,\S,\M\rangle$ is the OBDM specification such that $\O=\{\textsf{MathStudent} \sqsubseteq \textsf{ Student}, \textsf{ ForeignStudent} \sqsubseteq \textsf{ Student}\}$, $\S=\{s_1,s_2,s_3,s_4,s_5\}$, and $\M$ contains the GAV assertions:
    \begin{align*}
        & \{(x) \mid s_1(x)\} \rightarrow \{(x) \mid \textsf{ Student}(x)\}\\
        & \{(x) \mid s_2(x)\} \rightarrow \{(x) \mid \textsf{ Student}(x)\}\\
        & \{(x_1,x_2) \mid s_3(x_1,x_2)\}\rightarrow \{(x_1,x_2) \mid \textsf{ EnrolledIn}(x_1,x_2)\}\\
        & \{(x) \mid \exists y \per s_3(x,y) \wedge s_4(y)\}\rightarrow \{(x) \mid \textsf{ MathStudent}(x)\}\\
        & \{(x) \mid \exists y \per s_3(x,y) \wedge s_5(y)\}\rightarrow \{(x) \mid \textsf{ ForeignStudent}(x)\}
        \vspace*{-.12cm}
    \end{align*}
    And the $\S$-database is $D=\{s_1(c_4), s_2(c_3), s_4(b_1), s_5(d_1), \\ s_3(c_1,b_1), s_3(c_2,d_1), s_3(c_3,e_1), s_3(c_4,e_2), s_3(c_5,e_3)\}$. For the $D$-dataset $\lambda=\{(c_1),(c_2),(c_3)\}$, since $q_{\O}^1=\{(x) \mid \textsf{ Student}(x)\}$ and $q_{\O}^2=\{(x) \mid \exists y \per \textsf{EnrolledIn}(x,y)\}$ are such that $\cert_{q_{\O}^1,J}^D=\{(c_1),(c_2),(c_3),(c_4)\}$ and $\cert_{q_{\O}^2,J}^D=\{(c_1),(c_2),(c_3),(c_4),(c_5)\}$, and since $q_{\O}^3=\{(x) \mid \textsf{MathStudent}(x)\}$ and $q_{\O}^4=\{(x) \mid \textsf{ ForeignStudent}(x)\}$ are such that $\cert_{q_{\O}^3,J}^D=\{(c_1)\}$ and $\cert_{q_{\O}^4,J}^D=\{(c_2)\}$, one can verify that no perfect $\Sigma$-characterization of $\lambda$ in UCQ exists.
\end{example}

Notice the difference with the notion of \emph{abstraction}~\cite{Cima20,CCCLP21}, introduced in~\cite{Cima17} and studied under various scenarios~\cite{LuMS18,CiLP19,CiLP20}. In abstraction, we are given an OBDM specification $J=\tup{\O,\S,\M}$ and a query $q_{\S}$ over $\S$, and the aim is to find a query $q_{\O}$ over $\O$, called \emph{the perfect $J$-abstraction of $q_{\S}$}, such that $\cert_{q_{\O},J}^D=q_{\S}^D$ \emph{for each $\S$-database $D$} for which $\tup{J,D}$ is consistent. Conversely, here we are also given an $\S$-database $D$, and instead of a query $q_{\S}$ we have a set of tuples $\lambda$ of constants taken from $D$, and the aim is to find a query $q_{\O}$ over $\O$ such that $\cert_{q_{\O},J}^D=\lambda$. 
The following proposition establishes the relationship between the notion of characterization introduced here and the notion of abstraction.
\begin{proposition}
    Let $\Sigma=\tup{J,D}$ be a consistent OBDM system, $\lambda$ be a $D$-dataset, and $q_{\S}$ be a query that defines $\lambda$ inside $D$. If a query $q_{\O} \in \Q$ is the perfect $J$-abstraction of $q_{\S}$, then $q_{\O}$ is the perfect $\Sigma$-characterization of $\lambda$ in $\Q$.
\end{proposition}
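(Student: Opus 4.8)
The plan is to unwind the two definitions and observe that they both boil down to the same equality of sets of tuples. Let $\Sigma = \tup{J,D}$ be a consistent OBDM system with $J = \tup{\O,\S,\M}$, let $\lambda$ be a $D$-dataset, and let $q_\S$ be a query that defines $\lambda$ inside $D$, i.e.\ $q_\S^D = \lambda$ by the definition of ``defines $\lambda$ inside $D$''. Suppose $q_\O \in \Q$ is the perfect $J$-abstraction of $q_\S$. By the definition of perfect abstraction recalled just before the statement, this means $\cert_{q_\O,J}^{D'} = q_\S^{D'}$ for \emph{every} $\S$-database $D'$ for which $\tup{J,D'}$ is consistent. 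I would then instantiate this universally quantified property at the particular database $D'=D$, which is legitimate precisely because $\Sigma = \tup{J,D}$ is assumed consistent. This yields $\cert_{q_\O,J}^{D} = q_\S^{D}$.

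Combining this with the hypothesis $q_\S^D = \lambda$, I get $\cert_{q_\O,J}^{D} = \lambda$, which is exactly the condition in Definition~\ref{def:explanation} for $q_\O$ to be a perfect $\Sigma$-characterization of $\lambda$ in $\Q$. Since the membership $q_\O \in \Q$ is already part of the hypothesis, nothing more is needed there. Finally, to justify the use of the definite article ``\emph{the} perfect $\Sigma$-characterization'', I would invoke the remark immediately following Definition~\ref{def:explanation}: whenever a perfect $\Sigma$-characterization exists it is unique up to $\Sigma$-equivalence, so it is harmless to speak of \emph{the} perfect $\Sigma$-characterization.

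There is essentially no obstacle here; the only point that requires a moment's care is making sure the consistency hypothesis on $\Sigma$ is what licenses specializing the ``for each $\S$-database $D$ for which $\tup{J,D}$ is consistent'' clause in the definition of abstraction to the given $D$ — without the consistency of $\Sigma$ the instantiation would be vacuous and the conclusion would fail (indeed, for inconsistent $\tup{J,D}$ the certain answers degenerate to all tuples of the right arity). So the proof is really a one-line chain of equalities, $\cert_{q_\O,J}^{D} = q_\S^{D} = \lambda$, with the first equality coming from the abstraction hypothesis applied at $D$ and the second from the defining property of $q_\S$; I would present it in roughly that form, flagging explicitly where consistency is used.
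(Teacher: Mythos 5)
Your proof is correct and follows exactly the same route as the paper's: instantiate the universal quantifier in the definition of perfect $J$-abstraction at the database $D$ (licensed by the consistency of $\Sigma$), and chain $\cert_{q_\O,J}^{D} = q_\S^{D} = \lambda$. Nothing is missing.
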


\begin{proof}
    Suppose $q_{\O} \in \Q$ is the perfect $J$-abstraction of $q_{\S}$, i.e., $\cert_{q_{\O},J}^{D'}=q_{\S}^{D'}$ for each $\S$-database $D'$ for which $\tup{J,D'}$ is consistent. Since $\Sigma=\tup{J,D}$ is consistent and since $q_{\S}^D=\lambda$ by assumption that $q_{\S}$ defines $\lambda$ inside $D$, we have that $\cert_{q_{\O},J}^D=q_{\S}^D=\lambda$, which means that $q_{\O}$ is a perfect $\Sigma$-characterization of $\lambda$ in $\Q$.
\end{proof}

The next example shows that the converse of the above proposition does not necessarily hold, thus stressing the fact that the two problems are indeed different.

\begin{example}\label{ex:ControEsemp}
    Let $\Sigma=\tup{J,D}$ be as follows: \myi $J=\tup{\O,\S,\M}$ is such that $\O=\emptyset$, $\S=\{s_1,s_2\}$, and $\M=\{m_1,m_2\}$ with $m_1=\{(x) \mid s_1(x)\} \rightarrow \{(x) \mid A(x)\}$ and $m_2=\{(x) \mid s_2(x)\} \rightarrow \{(x) \mid A(x)\}$; and \myii $D=\{s_1(c)\}$.
    
    For the $D$-dataset $\lambda=\{(c)\}$, one can verify that $q_{\S}=\{(x) \mid s_1(x)\}$ is such that $q_{\S}^D=\lambda$ and that $q_{\O}=\{(x) \mid A(x)\}$ is such that $\cert_{q_{\O},J}^D=\lambda$, i.e., $q_{\O}$ is the perfect $\Sigma$-characterization of $\lambda$ in CQ. However, the query $q_{\O}$ is not a perfect $J$-abstraction of $q_{\S}$, since for the $\S$-database $D'=\{s_2(c)\}$ we have $\cert_{q_{\O},J}^{D'}=\{(c)\}$ whereas $q_{\S}^{D'}=\emptyset$.
\end{example}

Clearly, the more expressive the query language $\Q$, the more likely we can express the implicit relationship between the tuples in $\lambda$ by means of the operators in $\Q$, and therefore the more likely the perfect characterization in $\Q$ exists. Unfortunately, the next example shows that, even without any restriction on the query language, perfect characterizations are not guaranteed to exist even in trivial cases.

\begin{example}\label{ex:ControEsemp2}
    Recall the OBDM specification $J$ of the previous example, and let $\Sigma=\tup{J,D}$ be the OBDM system with $D=\{s_1(c_1),s_2(c_2)\}$. For the $D$-dataset $\lambda=\{(c_1)\}$, one can trivially verify that, whatever is the query language $\Q$, there is no query $q_{\O} \in \Q$ for which $\cert_{q_{\O},J}^D=\lambda$.
\end{example}

Note the importance of the role played by the mapping $\M$ in order to reach this conclusion. Indeed, if we replace $m_2$ with $\{(x) \mid s_2(x)\} \rightarrow \{(x) \mid B(x)\}$, then the perfect $\Sigma$-characterization of $\lambda$ would be the CQ $\{(x) \mid A(x)\}$.

Borrowing the ideas from~\cite{CiLP19} to remedy situations where perfect abstractions do not exist, we now introduce approximations of perfect characterizations in terms of recall (complete) and precision (sound).

\begin{definition}
    $q_\O \in \Q$ is a \emph{complete} (resp., \emph{sound}) $\Sigma$-\emph{characterization} of $\lambda$ in the query language $\Q$, if $\lambda \subseteq cert^D_{q_\O, J}$ (resp., $cert^D_{q_\O, J} \subseteq \lambda$).
\end{definition}

\begin{example}
    Refer to Example~\ref{ex:mainexample}. We have that $q_{\O}^1$ and $q_{\O}^2$ are complete $\Sigma$-characterization of $\lambda$, whereas $q_{\O}^3$ and $q_{\O}^4$ are sound $\Sigma$-characterization of $\lambda$.
\end{example}

As the above example manifests, there may be several complete and sound characterizations relative to a query language $\Q$. In those cases, the interest is unquestionably in those that best approximate the perfect one.

\begin{definition}
    $q_\O$ is a $\Q$-\emph{minimally complete} (resp., $\Q$-\emph{maximally sound}) $\Sigma$-\emph{characterization} of $\lambda$, if $q_{\O}$ is a complete (resp., sound) $\Sigma$-characterization of $\lambda$ in $\Q$ and there is no $q'_\O \in \Q$ such that \myi $q'_\O$ is a \emph{complete} (resp., sound) $\Sigma$-characterization of $\lambda$ and \myii $cert^D_{q'_\O, J} \subset cert^D_{q_\O, J}$ (resp., $cert^D_{q_\O, J} \subset cert^D_{q'_\O, J}$).
\end{definition}

\begin{example}
    Refer again to Example~\ref{ex:mainexample}. The CQ $q_{\O}^1$ is a UCQ-minimally complete $\Sigma$-characterization of $\lambda$, whereas $q_{\O}^2$ is not. Both $q_{\O}^3$ and $q_{\O}^4$ are CQ-maximally sound $\Sigma$-characterizations of $\lambda$, but neither of them is a UCQ-maximally sound $\Sigma$-characterization of $\lambda$. Indeed, a UCQ-maximally sound $\Sigma$-characterization of $\lambda$ is $q_{\O}^5=q_{\O}^3 \cup q_{\O}^4$.
\end{example}


Given this general framework, there are (at least) three computational problems to consider, with respect to an ontology language $\L_{\O}$, a mapping language $\L_{\M}$, and a query language $\Q$. Given an OBDM system $\Sigma=\tup{\tup{\O,\S,\M},D}$ and a $D$-dataset $\lambda$, where $\O \in \L_{\O}$ and $\M \in \L_{\M}$:
\begin{itemize}
    \item \emph{Verification:} given $q_{\O} \in \Q$, check whether $q_{\O}$ is a perfect (resp., complete, sound) $\Sigma$-characterization of $\lambda$.
    \item \emph{Computation:} compute the perfect in $\Q$ (resp., $\Q$-minimally complete, or $\Q$-maximally sound) $\Sigma$-characterization of $\lambda$, provided it exists.
    \item \emph{Existence:} check whether there exists a perfect in $\Q$ (resp., $\Q$-minimally complete, or $\Q$-maximally sound) $\Sigma$-characterization of $\lambda$.
\end{itemize}

In what follows, if not otherwise stated, we refer to the following scenario which considers by far the most popular languages for the OBDM paradigm: \myi $\L_{\O}$ is $\dlliter$, \myii $\L_{\M}$ is GLAV, and \myiii $\Q$ is UCQ.

In this scenario, there are two interesting properties that are worth mentioning. First, since the UCQ language allows for the conjunction (resp., union) operator, if an UCQ-minimally complete (resp., UCQ-maximally sound) $\Sigma$-characterization of $\lambda$ exists, then it is unique up to $\Sigma$-equivalence. 

\begin{proposition}\label{prop:Uniq}
    If $q_1$ and $q_2$ are UCQ-minimally complete (resp., UCQ-maximally sound) $\Sigma$-characterizations of $\lambda$, then they are equivalent \wrt\ $\Sigma$.
\end{proposition}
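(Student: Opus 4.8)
The plan is to exploit the closure of the UCQ language under union (for the complete case) and the fact that conjunction of certain answers is captured by... actually, let me think carefully: for the complete case we want to show a minimally complete characterization is unique up to $\Sigma$-equivalence, and the natural tool is that if $q_1$ and $q_2$ are both complete, then so is $q_1 \cup q_2$, whose certain answers (for UCQs, and using $\cert^D_{q_1 \cup q_2, J} = \cert^D_{q_1,J} \cup \cert^D_{q_2,J}$, valid when $\Sigma$ is consistent) equal $\cert^D_{q_1,J} \cup \cert^D_{q_2,J}$. Since $\lambda \subseteq \cert^D_{q_1,J}$ and $\lambda \subseteq \cert^D_{q_2,J}$, the union is also complete; and $\cert^D_{q_1 \cup q_2, J} \subseteq \cert^D_{q_i,J}$... no wait, that's the wrong direction. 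Let me reconsider.

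First I would handle the complete case. Suppose $q_1, q_2$ are UCQ-minimally complete $\Sigma$-characterizations of $\lambda$. The key observation is that $\cert$ is monotone and that UCQs are closed under a suitable operation that produces a query whose certain answers are the \emph{intersection} $\cert^D_{q_1,J} \cap \cert^D_{q_2,J}$ — namely, the UCQ $q_1 \wedge q_2$ obtained by pairing each disjunct of $q_1$ with each disjunct of $q_2$ and conjoining their bodies (taking the product CQ). For CQs $p_1, p_2$ one has $\cert^D_{p_1 \wedge p_2, J} = \cert^D_{p_1,J} \cap \cert^D_{p_2,J}$ via the canonical-structure characterization at the end of Section~\ref{sec:Preliminaries} (a tuple $\vec c$ is a certain answer of the conjunctive product iff both bodies map into $\C_\O^{\M(D)}$ sending the target list to $\vec c$), and this lifts to UCQs by distributing. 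Hence $q = q_1 \wedge q_2 \in \Q$ has $\cert^D_{q,J} = \cert^D_{q_1,J} \cap \cert^D_{q_2,J} \supseteq \lambda$, so $q$ is complete, and $\cert^D_{q,J} \subseteq \cert^D_{q_1,J}$ and $\subseteq \cert^D_{q_2,J}$. By minimality of $q_1$ we cannot have $\cert^D_{q,J} \subsetneq \cert^D_{q_1,J}$, so $\cert^D_{q,J} = \cert^D_{q_1,J}$; symmetrically $\cert^D_{q,J} = \cert^D_{q_2,J}$; therefore $\cert^D_{q_1,J} = \cert^D_{q_2,J}$, i.e., $q_1 \equiv_\Sigma q_2$.

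Dually, for the maximally sound case I would use closure of UCQ under union: $q = q_1 \cup q_2$ satisfies $\cert^D_{q,J} = \cert^D_{q_1,J} \cup \cert^D_{q_2,J}$ when $\Sigma$ is consistent (and trivially when inconsistent, since then all certain answer sets coincide with the full set of tuples of the appropriate arity, making the statement immediate). Since $\cert^D_{q_i,J} \subseteq \lambda$, the union is $\subseteq \lambda$, so $q$ is sound, and $\cert^D_{q_i,J} \subseteq \cert^D_{q,J}$ for $i=1,2$. Maximality of $q_1$ forbids $\cert^D_{q_1,J} \subsetneq \cert^D_{q,J}$, so $\cert^D_{q_1,J} = \cert^D_{q,J}$, and symmetrically $\cert^D_{q_2,J} = \cert^D_{q,J}$, whence $q_1 \equiv_\Sigma q_2$.

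The only delicate point — the "main obstacle" such as it is — is justifying the two semantic identities $\cert^D_{q_1 \wedge q_2, J} = \cert^D_{q_1,J} \cap \cert^D_{q_2,J}$ and $\cert^D_{q_1 \cup q_2, J} = \cert^D_{q_1,J} \cup \cert^D_{q_2,J}$ for $\dlliter$ ontologies with GLAV mappings, and handling the inconsistent case separately. The union identity is essentially immediate from the definition of certain answers for a UCQ (a disjunct is satisfied in every model iff ... well, one must be slightly careful, but it follows cleanly from the canonical-structure characterization: $\vec c \in \cert^D_{q_1 \cup q_2,J}$ iff some disjunct body of $q_1 \cup q_2$ homomorphically maps into $\C_\O^{\M(D)}$ with target list $\vec c$, iff the same holds for a disjunct of $q_1$ or of $q_2$). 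The intersection identity needs the product-CQ construction and the observation that a homomorphism of $\body(\phi_1) \cup \body(\phi_2')$ (with variables suitably renamed apart except on the shared target list) into $\C_\O^{\M(D)}$ fixing the target list to $\vec c$ exists iff separate such homomorphisms exist for $\body(\phi_1)$ and $\body(\phi_2)$; both directions are routine since the variable sets are disjoint off the target list. With consistency of $\Sigma$ assumed throughout Section~\ref{sec:Framework} (it is assumed in the surrounding development), this is unproblematic, but I would add one sentence noting that if $\Sigma$ is inconsistent the proposition holds vacuously because every query has the same certain answers.
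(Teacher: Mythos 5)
Your proof is correct and is essentially the argument the paper has in mind: the paper's proof is just a pointer to the analogous uniqueness result for abstractions in~\cite{Cima20}, which likewise uses the product (conjunction) of the two queries for the minimally complete case and their union for the maximally sound case, with the semantic identities justified via the canonical structure. Your explicit treatment of the variable renaming in the product CQ and of the inconsistent case is a faithful and complete instantiation of that cited argument.
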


\begin{proof}
    Similar to~\cite[Proposition~3.2]{Cima20}.
\end{proof}

Due to the above property, in what follows we simply refer to \emph{the} UCQ-minimally complete (resp., UCQ-maximally sound) $\Sigma$-characterization of $\lambda$.

Second, as expected, in this scenario perfect characterizations are less likely to exist than in the plain relational database case.


\begin{proposition}
    Let $\Sigma=\tup{J,D}$ be a consistent OBDM system, and $\lambda$ be a $D$-dataset. If there exists a perfect $\Sigma$-characterization of $\lambda$ in UCQ, then $\lambda$ is UCQ-definable inside $D$.
\end{proposition}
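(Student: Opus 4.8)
The plan is to exploit the query rewriting machinery recalled in the preliminaries: given a perfect $\Sigma$-characterization $q_{\O} \in$ UCQ of $\lambda$, I would ``compile away'' both the ontology $\O$ and the mapping $\M$, obtaining a UCQ over the source schema $\S$ that evaluates to $\lambda$ on $D$. The natural candidate is exactly the rewriting $\rewr_{q_{\O},J}$ already introduced in the preliminaries.

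Concretely, I would set $q_{\S} := \rewr_{q_{\O},J} = \mapRef(\M,\perfectRef(\O,q_{\O}))$. By the results recalled in the preliminaries (combining the properties of $\perfectRef$ from~\cite{CDLLR07} with those of $\mapRef$ from~\cite{FrLM99,CDLV12}), $q_{\S}$ is a UCQ over $\S$, and $\cert^{D'}_{q_{\O},J} = q_{\S}^{D'}$ for every $\S$-database $D'$ such that $\tup{J,D'}$ is consistent. Since $\Sigma = \tup{J,D}$ is consistent by hypothesis, instantiating $D' := D$ gives $q_{\S}^{D} = \cert^{D}_{q_{\O},J} = \lambda$, where the last equality is precisely the assumption that $q_{\O}$ is a perfect $\Sigma$-characterization of $\lambda$. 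Hence $q_{\S}$ defines $\lambda$ inside $D$ in the sense of the preliminaries, so $\lambda$ is UCQ-definable inside $D$, as required.

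I do not expect any genuine obstacle here: the statement is essentially a corollary of the perfect rewritability of UCQs with respect to $\dlliter$ ontologies and GLAV mappings. The single point needing care is the use of the consistency hypothesis: the equality $\cert^{D}_{q_{\O},J} = \rewr_{q_{\O},J}^{D}$ holds only when $\tup{J,D}$ is consistent, which is exactly why consistency of $\Sigma$ appears in the statement. (Were $\Sigma$ inconsistent, $\cert^{D}_{q_{\O},J}$ would be the full set $\dom(D)^n$ independently of $q_{\O}$, and one would instead observe that $\dom(D)^n$ is trivially UCQ-definable inside $D$; but this degenerate case is ruled out by assumption.)
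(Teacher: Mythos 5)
Your proof is correct and follows exactly the paper's own argument: take $q_{\S} := \rewr_{q_{\O},J}$, use the rewriting property $\cert^{D'}_{q_{\O},J} = \rewr_{q_{\O},J}^{D'}$ for consistent $\tup{J,D'}$ from the preliminaries, and instantiate at $D$ using the consistency hypothesis to conclude $q_{\S}^{D} = \lambda$. Nothing to add.
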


\begin{proof}
    Suppose there exists a perfect $\Sigma$-characterization of $\lambda$ in UCQ, i.e., there is a UCQ $q_{\O}$ over $\O$ for which $\cert_{q_{\O},J}^D=\lambda$. Recall from Section~\ref{sec:Preliminaries} that the UCQ $\rewr_{q_{\O},J}$ over $\S$ is such that $\cert_{q_{\O},J}^{D'}=\rewr_{q_{\O},J}^{D'}$ for each $\S$-database $D'$ for which $\tup{J,D'}$ is consistent. Since $\Sigma=\tup{J,D}$ is consistent, we have $\lambda=\cert_{q_{\O},J}^{D}=\rewr_{q_{\O},J}^{D}$, from which immediately follows that $\rewr_{q_{\O},J}$ defines $\lambda$ inside $D$, and thus $\lambda$ is UCQ-definable inside $D$.
\end{proof}

In general, the converse of the above proposition does not hold. Indeed, in Example~\ref{ex:ControEsemp2}, while there is no perfect $\Sigma$-characterization of $\lambda$ in any query language $\Q$, the CQ $q_{\S}=\{(x) \mid s_1(x)\}$ witnesses that $\lambda$ is CQ-definable inside $D$.



\section{Verification}\label{sec:Verification}

We now define the verification problems for $X$-query definability ($X$-VQDEF), where $X$=\{Perfect, Complete, Sound\}. These decision problems are parametric with respect to the ontology language $\L_{\O}$ to express $\O$, the mapping language $\L_{\M}$ to express $\M$, and the query language $\Q$ to express $q_{\O}$. 
\noindent
\begin{table}[h]
    \centering
    \framebox{
            \begin{tabular}{p{0.17\linewidth}p{0.68\linewidth}}
                \textsc{Problem}: & \textbf{X-VQDEF($\L_{\O}$, $\L_{\M}$, $\Q$)} \\
                \textsc{Input}: & An OBDM system $\Sigma=\tup{\tup{\O,\S,\M},D}$, a $D$-dataset $\lambda$, and a query $q_{\O} \in \Q$ over $\O$, where $\O \in \L_{\O}$ and $\M \in \L_{\M}$.\\
                \textsc{Question}: & Is $q_{\O}$ a \textbf{X} $\Sigma$-characterization of $\lambda$?
            \end{tabular}
    }
\end{table}

In what follows, given a syntactic object $x$ such as a query, an ontology, or a mapping, we denote by $\sigma(x)$ its size.

\begin{theorem}\label{th:UBC}
    Complete-VQDEF($\dlliter$, GLAV, UCQ) is in $\NP$.
\end{theorem}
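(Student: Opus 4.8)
The plan is to show that Complete-VQDEF can be decided by a nondeterministic polynomial-time procedure. Recall that $q_{\O}$ is a complete $\Sigma$-characterization of $\lambda$ precisely when $\lambda \subseteq \cert^D_{q_{\O},J}$, i.e.\ when every tuple $\vec{c} \in \lambda$ is a certain answer of $q_{\O}$ \wrt\ $\Sigma$. First I would dispose of the inconsistent case: by the observations in the preliminaries, $\tup{J,D}$ is inconsistent iff $\rewr_{V_\O,J}^D = \{\tup{}\}$, and in that case $\cert^D_{q_{\O},J}$ is the full set of tuples of the appropriate arity, so completeness holds trivially. For the consistent case, I would use the characterization via the canonical structure: for a consistent $\Sigma$, $\vec{c} \in \cert^D_{q_{\O},J}$ iff $(\body(\phi^i_{\O}),\vec{x_i}) \rightarrow (\C_{\O}^{\M(D)},\vec{c})$ for some disjunct $i$ of $q_{\O}$. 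So the task reduces to: for each $\vec{c} \in \lambda$, guess a disjunct of $q_{\O}$ and a homomorphism witnessing this containment.

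The subtlety, and the step I expect to be the main obstacle, is that $\C_{\O}^{\M(D)}$ is in general infinite, so one cannot guess a homomorphism into it directly and then verify it in polynomial time. The standard remedy is to bound the relevant portion of the chase. Concretely, I would argue that any homomorphism from a CQ body (whose size is at most $\sigma(q_{\O})$) into $\C_{\O}^{\M(D)}$ can be assumed to map into an initial fragment of the chase of bounded depth — the depth needed is at most the number of atoms in the disjunct, since $\dlliter$ chase rules generate tree-shaped witnesses and a connected query of size $k$ can reach at most depth $k$ below any anchor in $\M(D)$. The width is controlled because $\M(D)$ itself, although possibly of exponential size in $\sigma(\M)$ and $\sigma(D)$, only needs to be consulted at the atoms actually hit by the homomorphism: at most $\sigma(q_{\O})$ of them. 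Thus a polynomial-size certificate exists — namely, for each $\vec{c} \in \lambda$, the chosen disjunct, the polynomially-many atoms of $\M(D)$ used as anchors together with the witnessing source tuples and homomorphisms producing them (each source witness is verifiable by evaluating a fixed CQ over $D$ in polynomial time), and the bounded-depth chase steps of $\O$ extending them. The NP procedure guesses all of this and checks in polynomial time that (i) the guessed chase fragment is correctly generated by $\M$ and the inclusions of $\O$, and (ii) the guessed homomorphism maps the chosen disjunct body into this fragment with the distinguished variables going to $\vec{c}$.

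Finally I would also need the nondeterministic inconsistency check to fit within NP; but consistency of a $\dlliter$ OBDM system with a GLAV mapping is checked by evaluating the (UCQ) rewriting $\rewr_{V_\O,J}$ over $D$, and the positive case (witnessing inconsistency) is again an NP-style homomorphism guess of the same bounded-fragment form, so it does not dominate. Putting the pieces together: guess either an inconsistency witness (and accept), or, assuming consistency, for every $\vec{c}\in\lambda$ guess a bounded certificate as above; since $|\lambda|$ is part of the input, this is still polynomially many guesses. Hence the whole procedure runs in NP. The only genuinely delicate point to write carefully is the depth/width bound on the fragment of $\C_{\O}^{\M(D)}$ that must be materialized, and the claim that each anchor atom of $\M(D)$ together with its source justification admits a polynomial-size certificate verifiable against $D$ — this is where the GLAV shape of $\M$ (CQ-to-CQ, with the source side evaluable over $D$ in PTime) is used essentially.
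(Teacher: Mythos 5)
Your argument is correct in outline, but it takes a genuinely different route from the paper's. The paper certifies $\vec{c}\in\cert^D_{q_\O,J}$ by \emph{backward rewriting}: for each $\vec{c}\in\lambda$ it guesses a single disjunct $q'_\O$ of $\perfectRef(\O,q_\O)$ (or of $\perfectRef(\O,V_\O)$, to cover inconsistency) together with the sequence of ontology assertions deriving it, then a disjunct $q_\S$ of $\mapRef(\M,q'_\O)$ together with the sequence of mapping assertions used, and finally a homomorphism from $q_\S$ into $D$; all of these objects are polynomially bounded (the mapping unfolding blows up a disjunct by at most a factor $\sigma(\M)$) and each step is checkable in polynomial time, so no portion of the chase is ever materialized. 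You instead go forward: you invoke the canonical-structure characterization and guess a polynomial-size fragment of $\C_\O^{\M(D)}$ — anchor atoms of $\M(D)$ justified by source-side witnesses over $D$, extended by a bounded-depth piece of the $\dlliter$ chase. This works, but it is where all the delicacy concentrates, and two points you flag only implicitly would need to be made explicit: (i) when a GLAV head has existential variables, several anchor atoms share a labelled null, so the verifier must check that atoms claimed to share a null are produced by the \emph{same} application of the \emph{same} assertion on the \emph{same} source witness (your certificate contains the data for this, but the check must be stated); and (ii) the claim that a connected piece of the query of size $k$ mapping into the anonymous tree part can be assumed to map within depth $k$ of an anchor is exactly the non-trivial lemma that $\perfectRef$ packages for you — proving it from scratch is the standard but fiddly ``folding'' argument for \textit{DL-Lite} chases. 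In short, the paper's certificate is a rewriting derivation and yours is a chase fragment; the rewriting route lets one lean on known properties of $\perfectRef$/$\mapRef$ and keeps the verification entirely on the database side, while your route is more self-contained semantically but shifts the burden onto carefully bounding and validating the guessed fragment.
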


\begin{proof}
    We now show how to check whether $q_{\O}$ is a complete $\Sigma$-characterization of $\lambda$ (i.e., $\lambda \subseteq \cert_{q_{\O},J}^D$) in $\NP$, where $\Sigma=\tup{J,D}$ with $J=\tup{\O,\S,\M}$.

    Let $n$ be the arity of the tuples in the $D$-dataset $\lambda$. For each $n$-tuple of constants $\vec{c} \in \lambda$, we first guess \myi a CQ $q'_{\O}$ over $\O$ which is either of arity $n$ and size at most $\sigma(q_{\O})$, or a boolean one capturing a disjointness assertion $d$ (e.g., $\{() \mid \exists y \per A_1(y) \wedge A_2(y)\}$ capturing $d=A_1 \ISA \neg A_2$); \myii a sequence $\rho_{\O}$ of ontology assertions; \myiii a CQ $q_{\S}$ over $\S$ of size at most $\sigma(\M) \cdot \sigma(q'_{\O})$ which is either of arity $n$ and of the form $\{\vec{x} \mid \exists \vec{y} \per \phi_{\S}(\vec{x},\vec{y})\}$, or a boolean one of the form $\{() \mid \exists \vec{y} \per \phi_{\S}(\vec{y})\}$; \myiv a sequence $\rho_{\M}$ of mapping assertions; and \myv a function $f$ from the variables occurring in $q_{\S}$ to $\dom(D)$. 
    
    Then, we check in polynomial time whether \myi by means of $\rho_{\O}$, either we can rewrite a disjunct of $q_{\O}$ into $q'_{\O}$ through $\O$ (i.e., $q'_{\O} \in \perfectRef(\O,q_{\O})$), or we can rewrite a disjunct of $V_{\O}$ into $q'_{\O}$ through $\O$ (i.e., $q'_{\O} \in \perfectRef(\O,V_{\O})$); \myii by means of $\rho_{\M}$ we can rewrite $q'_{\O}$ into $q_{\S}$ through $\M$ (i.e., $q_{\S} \in \mapRef(\M,q'_{\O})$, and thus either $q'_{\O} \in \rewr_{q_{\O},J}$ or $q'_{\O} \in \rewr_{V_{\O},J}$); and finally \myiii $f$ consists in a homomorphism witnessing either $(\body(\phi_{\S}),\vec{x}) \rightarrow (D,\vec{c})$, i.e., $\vec{c} \in q_{\S}^D$ (and therefore $\vec{c} \in \rewr_{q_{\O},J}^D$, which means $\vec{c} \in \cert_{q_{\O},J}^D$), or $(\body(\phi_{\S}),()) \rightarrow (D,())$, i.e., $D \models q_{\S}$ (and therefore $\rewr_{V_{\O},J}^D=\{\tup{}\}$, which means that $\Sigma$ is inconsistent and thus $\vec{c} \in \cert_{q_{\O},J}^D$ by definition).
\end{proof}

\begin{theorem}\label{th:UBS}
    Sound-VQDEF($\dlliter$, GLAV, UCQ) is in $\coNP$.
\end{theorem}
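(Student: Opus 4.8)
The plan is to establish the claim by showing that the \emph{complementary} problem lies in $\NP$: given $\Sigma=\tup{\tup{\O,\S,\M},D}$, a $D$-dataset $\lambda$ of arity $n$, and a UCQ $q_{\O}$ (we may assume $q_{\O}$ has arity $n$; the other case is easily seen to be in $\coNP$ as well, since then $q_{\O}$ is sound iff $\Sigma$ is consistent and $\rewr_{q_{\O},J}^D=\emptyset$), decide whether $q_{\O}$ is \emph{not} a sound $\Sigma$-characterization of $\lambda$, i.e.\ whether $\cert_{q_{\O},J}^D \not\subseteq \lambda$. The crucial preliminary step is a reformulation that avoids having to certify that $\Sigma$ is consistent (a co-$\NP$ property). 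Recalling from Section~\ref{sec:Preliminaries} that $\rewr_{q_{\O},J}^D \subseteq \cert_{q_{\O},J}^D$ always, that equality holds when $\Sigma$ is consistent, that $\cert_{q_{\O},J}^D = \dom(D)^n$ when $\Sigma$ is inconsistent, and that $\Sigma$ is inconsistent iff $\rewr_{V_{\O},J}^D = \{\tup{}\}$, one obtains: $\cert_{q_{\O},J}^D \not\subseteq \lambda$ holds iff either (a) $\rewr_{q_{\O},J}^D \not\subseteq \lambda$, or (b) $\Sigma$ is inconsistent and $\dom(D)^n \not\subseteq \lambda$. Both disjuncts are purely existential, which is what enables the $\NP$ bound.

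The $\NP$ procedure then guesses an $n$-tuple $\vec{c}$ of constants occurring in $D$, checks in polynomial time that $\vec{c}\notin\lambda$, and nondeterministically tries to verify either (a) $\vec{c}\in\rewr_{q_{\O},J}^D$ or (b) $\rewr_{V_{\O},J}^D=\{\tup{}\}$. In both cases the certificate is exactly the one used in the proof of Theorem~\ref{th:UBC}: a disjunct $q'_{\O}$ --- of arity $n$ and size at most $\sigma(q_{\O})$ in case (a), or a boolean one capturing a disjointness assertion of $\O$ in case (b) --- together with a sequence $\rho_{\O}$ of ontology assertions witnessing $q'_{\O}\in\perfectRef(\O,q_{\O})$ (resp.\ $q'_{\O}\in\perfectRef(\O,V_{\O})$), a CQ $q_{\S}$ over $\S$ of size at most $\sigma(\M)\cdot\sigma(q'_{\O})$, a sequence $\rho_{\M}$ of mapping assertions witnessing $q_{\S}\in\mapRef(\M,q'_{\O})$, and a function $f$ from the variables of $q_{\S}$ to $\dom(D)$ that is a homomorphism witnessing $\vec{c}\in q_{\S}^D$ (resp.\ $D\models q_{\S}$). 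As in Theorem~\ref{th:UBC}, all guessed objects have polynomial size and all checks run in polynomial time. For soundness of the procedure: in case (a) the certificate directly yields $\vec{c}\in\rewr_{q_{\O},J}^D\subseteq\cert_{q_{\O},J}^D$; in case (b) it yields $\rewr_{V_{\O},J}^D=\{\tup{}\}$, hence $\Sigma$ is inconsistent, hence $\cert_{q_{\O},J}^D=\dom(D)^n$, which, since $\vec{c}\in\dom(D)^n\setminus\lambda$, witnesses $\cert_{q_{\O},J}^D\not\subseteq\lambda$. For completeness, the reformulation above guarantees that every ``no'' instance of Sound-VQDEF admits a certificate of one of these two forms. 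Thus the complementary problem is in $\NP$, and Sound-VQDEF($\dlliter$, GLAV, UCQ) is in $\coNP$.

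The step I expect to require care is precisely the handling of inconsistency: the naive approach ``guess $\vec{c}\notin\lambda$ and certify $\vec{c}\in\cert_{q_{\O},J}^D$'' seems to need deciding first whether $\Sigma$ is consistent --- to know whether $\cert_{q_{\O},J}^D$ coincides with $\rewr_{q_{\O},J}^D$ or with the whole of $\dom(D)^n$ --- and since consistency is co-$\NP$ this would break the $\NP$ bound. The reformulation splitting the condition into ``$\rewr_{q_{\O},J}^D\not\subseteq\lambda$'' and ``$\Sigma$ inconsistent and $\dom(D)^n\not\subseteq\lambda$'' is the device that circumvents this; beyond it, the argument is a routine adaptation of the certificate-guessing machinery of Theorem~\ref{th:UBC}, relying on the standard facts that each disjunct of $\perfectRef(\O,\cdot)$ has size bounded by the corresponding input disjunct and that each derivation step in $\perfectRef$ and $\mapRef$ admits a polynomial-size witness.
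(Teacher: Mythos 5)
Your proof is correct and follows essentially the same route as the paper: guess a tuple $\vec{c}\in\dom(D)^n\setminus\lambda$ together with the same certificate objects ($q'_{\O}$, $\rho_{\O}$, $q_{\S}$, $\rho_{\M}$, $f$) as in Theorem~\ref{th:UBC}, which witness either $\vec{c}\in\rewr_{q_{\O},J}^D$ or inconsistency of $\Sigma$ via $\rewr_{V_{\O},J}^D=\{\tup{}\}$. Your explicit disjunctive reformulation of $\cert_{q_{\O},J}^D\not\subseteq\lambda$ simply spells out what the paper's reuse of the Theorem~\ref{th:UBC} machinery handles implicitly.
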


\begin{proof}
    We now show how to check whether $q_{\O}$ is not a sound $\Sigma$-characterization of $\lambda$ (i.e., $\cert_{q_{\O},J}^D \not\subseteq \lambda$) in $\NP$, where $\Sigma=\tup{J,D}$ with $J=\tup{\O,\S,\M}$.

    We first guess \myi a tuple of constants $\vec{c}$, and, exactly as in the proof of Theorem~\ref{th:UBC}, \myii $q'_{\O}$, $\rho_{\O}$, $q_{\S}$, $\rho_{\M}$, and $f$. Then, we check in polynomial time whether \myi $\vec{c}$ contains only constants from $\dom(D)$ and $\vec{c} \not\in \lambda$ (i.e., $\vec{c} \in \dom(D)^n \setminus \lambda$), and \myii using $q'_{\O}$, $\rho_{\O}$, $q_{\S}$, $\rho_{\M}$, and $f$, we follow exactly the same polynomial time procedure in the proof of Theorem~\ref{th:UBC} to check whether $\vec{c} \in \cert_{q_{\O},J}^D$.
\end{proof}

Recall that a decision problem is in $\DP$ if and only if it is the conjunction of a decision problem in $\NP$ and a decision problem in $\coNP$~\cite{PaYa84}. 
Since $q_{\O}$ is a perfect $\Sigma$-characterization of $\lambda$ if and only if it is both a sound, and a complete $\Sigma$-characterization of $\lambda$, we immediately derive the following upper bound.

\begin{corollary}\label{ch:UBP}
    Perfect-VQDEF($\dlliter$, GLAV, UCQ) is in $\DP$.
\end{corollary}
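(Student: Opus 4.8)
The plan is to observe that this corollary follows almost immediately by combining the two preceding theorems with the standard characterization of $\DP$ as the class of languages expressible as the intersection of an $\NP$ language and a $\coNP$ language~\cite{PaYa84}.

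First I would recall from the definitions of perfect, sound, and complete characterizations that $q_\O$ is a perfect $\Sigma$-characterization of $\lambda$ precisely when $\cert^D_{q_\O,J} = \lambda$, which holds if and only if both $\lambda \subseteq \cert^D_{q_\O,J}$ (completeness) and $\cert^D_{q_\O,J} \subseteq \lambda$ (soundness). Hence an instance $(\Sigma, \lambda, q_\O)$ of Perfect-VQDEF($\dlliter$, GLAV, UCQ) is a yes-instance if and only if the \emph{same} instance is simultaneously a yes-instance of Complete-VQDEF($\dlliter$, GLAV, UCQ) and a yes-instance of Sound-VQDEF($\dlliter$, GLAV, UCQ).

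Then I would invoke Theorem~\ref{th:UBC}, which places Complete-VQDEF($\dlliter$, GLAV, UCQ) in $\NP$, and Theorem~\ref{th:UBS}, which places Sound-VQDEF($\dlliter$, GLAV, UCQ) in $\coNP$. Since Perfect-VQDEF is exactly the conjunction of these two problems over an identical input, and since a language lies in $\DP$ if and only if it is the conjunction of an $\NP$ language and a $\coNP$ language, we conclude that Perfect-VQDEF($\dlliter$, GLAV, UCQ) is in $\DP$.

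There is essentially no obstacle here: the only point requiring a moment's care is that the $\DP$ characterization demands the two component languages to be evaluated on the very same input, which is indeed the case since $\Sigma$, $\lambda$, and $q_\O$ are shared verbatim by the completeness check of Theorem~\ref{th:UBC} and the soundness check of Theorem~\ref{th:UBS}; no padding or transformation of the instance is needed.
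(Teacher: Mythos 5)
Your proof is correct and follows exactly the paper's own argument: the paper likewise observes that a perfect characterization is precisely one that is both complete and sound, and then cites the characterization of $\DP$ as the conjunction of an $\NP$ problem and a $\coNP$ problem together with Theorems~\ref{th:UBC} and~\ref{th:UBS}. Nothing is missing.
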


We now provide matching lower bounds. We show that they already hold for the same, very simple, fixed OBDM system $\Sigma$ and dataset $\lambda$, and for single, unary CQs as queries.

\begin{theorem}\label{th:vLB}
    There is an OBDM system $\Sigma=\tup{\tup{\O,\S,\M},D}$ such that $\O=\emptyset$ and $\M$ is a GAV$\cap$LAV mapping, and a $D$-dataset $\lambda$ containing only a unary tuple for which the problem Complete-VQDEF($\emptyset$, GAV$\cap$LAV, CQ) (resp., Sound-VQDEF($\emptyset$, GAV$\cap$LAV, CQ), Perfect-VQDEF($\emptyset$, GAV$\cap$LAV, CQ)) is $\NP$-hard (resp., $\coNP$-hard, $\DP$-hard).
\end{theorem}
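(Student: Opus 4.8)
The plan is to reduce from a canonical NP-complete problem — I would use 3-SAT, or more conveniently a graph problem such as 3-COLORABILITY (or CLIQUE), whichever makes the mapping construction cleanest — to Complete-VQDEF, and from its complement to Sound-VQDEF, engineering both reductions to share a single fixed OBDM system and a fixed singleton dataset $\lambda=\{(a)\}$, so that the instance is entirely encoded in the input CQ $q_{\O}$. The key realization is that with $\O=\emptyset$, checking whether $(a)\in\cert_{q_{\O},J}^D$ amounts (by the machinery recalled in the preliminaries, since $\rewr_{q_{\O},J}=\mapRef(\M,q_{\O})$ when $\O=\emptyset$) to asking whether the CQ $q_{\O}$, after being ``pulled back'' through the GLAV mapping, has a homomorphism into $D$ mapping its single distinguished variable to $a$. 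Equivalently, and more directly, one can use the chase: $(a)\in\cert_{q_{\O},J}^D$ iff $(\body(q_{\O}),x)\rightarrow(\M(D),a)$ where $\M(D)$ is the fixed chase of the fixed database. So the task becomes: design a fixed ontology signature and a fixed structure $\M(D)$ over it such that CQ evaluation $(\body(q),x)\rightarrow(\M(D),a)$ is NP-hard in $q$ — which is exactly the classical NP-hardness of Boolean conjunctive query evaluation / graph homomorphism, with the target graph fixed.

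The concrete construction I have in mind: take the ontology alphabet to contain one atomic role $E$ and a few atomic concepts, and let $\M(D)$ be (the chase producing) a fixed target structure $T$ that is NP-hard as a homomorphism target — e.g. $T=K_3$ for 3-colorability, realized as three constants with $E$ interpreted as all six ordered pairs of distinct ones, plus one extra ``anchor'' element $a$ related appropriately so that the distinguished variable has somewhere neutral to go. To realize $\M(D)=T$ with $\O=\emptyset$, it suffices to pick $D$ to literally list the source atoms and $\M$ a GAV mapping (which is also LAV, since each source atom maps to a single ontology atom without existentials — giving the GAV$\cap$LAV requirement) copying source relations to $E$ and the concept names. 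Then, given a graph $G$, I build the CQ $q_G$ over $\O$ whose body has one existential variable per vertex of $G$, an $E$-atom for each edge (in both directions, to match the symmetric $K_3$), and whose single distinguished variable $x$ is attached via the anchor gadget so that it is forced to map to $a$ regardless; thus $(a)\in\cert_{q_G,J}^D$ iff $G$ is 3-colorable. This gives: $q_G$ is a complete $\Sigma$-characterization of $\{(a)\}$ iff $G\in 3\text{-COL}$, proving NP-hardness of Complete-VQDEF; and since $\lambda=\{(a)\}$ and $\dom(D)$ contains only $a$ among the candidate unary tuples — or, if I need non-members, I add one isolated constant $b\notin\lambda$ together with a companion gadget — $q_{G}$ (suitably adapted) is a sound $\Sigma$-characterization iff $G\notin 3\text{-COL}$, giving coNP-hardness of Sound-VQDEF. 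For Perfect-VQDEF I combine the two: build a single CQ $q_{G_1,G_2}$ as a disjoint union of a ``member-forcing'' gadget controlled by $G_1$ and a ``non-member-forcing'' gadget controlled by $G_2$, so that $q$ is perfect iff ($G_1\in 3\text{-COL}$ and $G_2\notin 3\text{-COL}$), which is the canonical DP-complete problem SAT-UNSAT (here in its 3-COL/3-UNCOL form); here I must be slightly careful that a single CQ, not a UCQ, suffices — but a CQ is connected-component-agnostic, so a disjoint union of two CQ bodies is again a CQ, and its homomorphisms into $\M(D)$ factor as a pair of independent homomorphisms, exactly as needed.

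**The main obstacle** I anticipate is making the distinguished-variable bookkeeping work so that the same fixed $\Sigma$ and fixed $\lambda$ serve all three reductions simultaneously, and in particular so that the ``non-membership'' direction needed for Sound- and Perfect-VQDEF is controlled by the query rather than leaking into $D$: I need a constant $b\in\dom(D)\setminus\lambda$ and a gadget in $q_{\O}$ that puts $b$ into the certain answers precisely when the encoded graph is 3-colorable, while keeping $a$ in (or out) independently. This is a matter of wiring two anchor gadgets that do not interfere — routine in spirit but fiddly — and of double-checking that $\dom(D)^1\setminus\lambda=\{b\}$ exactly, so that ``$\cert\not\subseteq\lambda$'' reduces cleanly to ``$b\in\cert$''. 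A secondary technical point is confirming that the CQ $q_{\O}$ I construct is a legal single CQ under the paper's definition (no constants allowed in queries — satisfied, since all vertices become existential variables and only the fixed schema symbols appear as predicates), and that the reductions are LOGSPACE/PTIME computable — immediate, since the fixed part is hard-wired and $q_{\O}$ is linear in the size of the input graph(s).
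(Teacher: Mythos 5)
Your reductions for Complete-VQDEF and Sound-VQDEF are sound and are close in spirit to the paper's: a fixed database realizing (via a copy GAV$\cap$LAV mapping) colour cliques as homomorphism targets, an anchor role whose range is restricted to the candidate answer constants, and a query $q_G$ encoding the input graph with one existential variable per vertex. The paper packages this slightly differently — a single query $q_G$ evaluated against a $K_3$ attached to $c_3$ and a $K_4$ attached to $c_4$, with $\lambda=\{(c_4)\}$, so that completeness is 4-colourability and soundness is non-3-colourability of the \emph{same} graph — but for those two problems your version works. (One small point: your hope that $\dom(D)^1\setminus\lambda=\{b\}$ exactly is unattainable, since the colour vertices are themselves constants of $D$; you must instead, as you partly anticipate and as the paper does, use the anchor atoms to force the distinguished variable into the range of the anchor role, which contains only the designated answer constants.)

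The genuine gap is in the \DP-hardness argument. You claim that because the body of $q_{G_1,G_2}$ is a disjoint union of a gadget for $G_1$ (containing the distinguished variable $x$) and a gadget for $G_2$, "its homomorphisms into $\M(D)$ factor as a pair of independent homomorphisms, exactly as needed." They do factor, but not as needed: for any candidate answer constant $t$, the tuple $(t)$ is a certain answer iff the $x$-component maps with $x\mapsto t$ \emph{and} the $G_2$-component maps somewhere — and the latter conjunct is the same Boolean condition regardless of whether $t=a$ or $t=b$. A component not containing $x$ cannot influence \emph{which} constants $x$ may map to; it can only gate all answers uniformly. Concretely, if the $G_2$-component is satisfiable exactly when $G_2$ is 3-colourable, you get $(a)\in\cert$ iff ($G_1\in$ 3-COL and $G_2\in$ 3-COL), so "perfect" would require $G_2$ to be simultaneously 3-colourable and not 3-colourable — the reduction collapses. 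To repair it you must attach \emph{both} gadgets to $x$, over disjoint role alphabets, and give each answer constant a "universal" (e.g., reflexive) sink for the gadget it is not meant to test, so that $x\mapsto a$ tests only $G_1$ and $x\mapsto b$ tests only $G_2$. The paper avoids this two-graph wiring entirely by reducing from the single-graph \emph{exact-4-colourability} problem (\DP-complete by Rothe): with the $K_3$/$K_4$ targets, one query $q_G$ already satisfies $\cert_{q_G,J}^D=\{(c_4)\}$ iff $G$ is 4-colourable and not 3-colourable, which is both simpler and immune to the independence issue.
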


\begin{proof}
 Let $\Sigma=\tup{J,D}$ be the OBDM system such that \myi $J=\tup{\O,\S,\M}$ is an OBDM specification in which $\O=\emptyset$ is an empty ontology whose alphabet contains two atomic roles $P_1$ and $P_2$, $\S=\{s_1,s_2\}$, and $\M$ contains the following two GAV$\cap$LAV assertions: 
    \begin{align*}
        & \{(x_1,x_2) \mid s_1(x_1,x_2)\} \rightarrow \{(x_1,x_2) \mid P_1(x_1,x_2)\},\\
        &\{(x_1,x_2) \mid s_2(x_1,x_2)\} \rightarrow \{(x_1,x_2) \mid P_2(x_1,x_2)\},
    \end{align*}
    which simply mirrors source predicate $s_i$ to atomic role $P_i$, for $i=[1,2]$, and \myii $D$ is the $\S$-database composed of the following facts: 
    \begin{align*}
        & \{s_1(x,y) \mid x=\{r',g',b'\} \text{ and } y=\{r',g',b'\} \text{ and } x \neq y\}\cup\\
        & \{s_1(x,y) \mid x=\{r,g,b,y\} \text{ and } y=\{r,g,b,y\} \text{ and } x \neq y\}\cup\\
        & \{s_2(x,c_3) \mid x=\{r',g',b'\}\} \cup \{s_2(x,c_4) \mid x=\{r,g,b,y\}\}.
    \end{align*}

    \noindent Let, moreover, $\lambda$ be the $D$-dataset $\lambda=\{(c_4)\}$.

    Let $G=(V,E)$ be a finite and undirected graph without loops or isolated nodes, where $V=\{y_1,\ldots,y_n\}$. We define a CQ $q_{G}=\{(x) \mid \exists \vec{y} \per \phi_{\O}(x,\vec{y})\}$ over $\O$ as follows: 
    $$
        q_{G}=\{(x) \mid \exists y_1,\ldots,y_n \per \bigwedge_{(y_i,y_j) \in E} (P_1(y_i,y_j))~\wedge \bigwedge_{y_i \in V} (P_2(y_i,x))\}
    $$

    Finally, notice that the CQ $q_{G}$ can be constructed in $\LOGSPACE$ from an input graph $G$. 
    
    By inspecting the OBDM system $\Sigma=\tup{J,D}$, for any graph $G$, the set of certain answers $\cert_{q_{G},J}^D$ is an element of the power set of $\{(c_3),(c_4)\}$. More specifically, the following property holds:

    \begin{claim}
        For both $i=3$ and $i=4$, we have that a graph $G=(V,E)$ is $i$-colourable if and only if $(c_i) \in \cert_{q_{G},D}^D$.
    \end{claim}

    \begin{proof}
        First of all, notice that $\C_{\O}^{\M(D)}$ is composed of the following facts:
        \begin{align*}
            & \{P_1(x,y) \mid x=\{r',g',b'\} \text{ and } y=\{r',g',b'\} \text{ and } x \neq y\}\cup\\
            & \{P_1(x,y) \mid x=\{r,g,b,y\} \text{ and } y=\{r,g,b,y\} \text{ and } x \neq y\}\cup\\
            & \{P_2(x,c_3) \mid x=\{r',g',b'\}\} \cup \{P_2(x,c_4) \mid x=\{r,g,b,y\}\}.
        \end{align*}

        ``\textbf{Only-if part:}'' Suppose $G=(V,E)$ is 3-colourable (resp., 4-colourable), that is, there exists a function $f:V\rightarrow \{r',g',b'\}$ (resp., $f:V\rightarrow \{r,g,b,y\}$) such that $f(y_i)\neq f(y_j)$ for each $(y_i,y_j)\in E$. Let $\phi_{\O}$ be the body of $q_{G}$, and consider the extension of $f$ which assigns to the distinguished variable $x$ of $q_{G}$ the constant $c_3$ (resp., $c_4$). It can be readily seen that $f$ consists in a homomorphism from $\body(\phi_{\O})$ to $\C_{\O}^{\M(D)}$ such that $f(x)=c_3$ (resp., $f(x)=c_4$). In other words, $f$ witnesses that $(\body(\phi_{\O}),(x)) \rightarrow (\C_{\O}^{\M(D)},(c_3))$ (resp., $(\body(\phi_{\O}),(x)) \rightarrow (\C_{\O}^{\M(D)},(c_4))$). Thus, $(c_3) \in \cert_{q_{G},J}^D$ (resp., $(c_4) \in \cert_{q_{G},J}^D$), as required.

        ``\textbf{If part:}'' Suppose $G=(V,E)$ is not 3-colourable (resp., not 4-colourable), that is, each possible function $f:V\rightarrow \{r',g',b'\}$ (resp., $f:V\rightarrow \{r,g,b,y\}$) is such that $f(y_i)=f(y_j)$ for some $(y_i,y_j)\in E$. Clearly, this implies that $(\body(\phi_{\O}),(x)) \not\rightarrow (\C_{\O}^{\M(D)},(c_3))$ (resp., $(\body(\phi_{\O}),(x)) \not\rightarrow (\C_{\O}^{\M(D)},(c_4))$). Thus, $(c_3) \not\in \cert_{q_{G},J}^D$ (resp., $(c_4) \not\in \cert_{q_{G},J}^D$), as required. \qed
    \end{proof}

    With the above property at hand, and the fact that $\cert_{q_{G},J}^D$ is an element of the power set of $\{(c_3),(c_4)\}$ for each possible finite and undirected graph $G=(V,E)$ without loops or isolated nodes, we are now ready to prove the claimed lower bounds.
    
    As for the complete case, the proof of $\NP$-hardness is by a $\LOGSPACE$ reduction from the \emph{4-colourability problem}, which is $\NP$-complete~\cite{GaJS76}. In particular, a graph $G$ is 4-colourable if and only if $(c_4) \in \cert_{q_{G},J}^D$, i.e., if and only if $\lambda \subseteq \cert_{q_{G},J}^D$. So, checking whether the CQ $q_{G}$ is a complete $\tup{\O,\Sigma}$-explanation of $\lambda$ is $\NP$-hard.

    As for the sound case, the proof of $\coNP$-hardness is by a $\LOGSPACE$ reduction from the \emph{complement of 3-colourability problem}, which is $\coNP$-complete~\cite{GaJS76}. In particular, a graph $G$ is not 3-colourable if and only if $(c_3) \not\in \cert_{q_{G},J}^D$, i.e., if and only if $\cert_{q_{G},J}^D \subseteq \lambda$. So, checking whether the CQ $q_{G}$ is a sound $\tup{\O,\Sigma}$-explanation of $\lambda$ is $\coNP$-hard.

    Finally, as for the perfect case, the proof of $\DP$-hardness is by a $\LOGSPACE$ reduction from the \emph{exact-4-colourability problem}, which is $\DP$-complete~\cite{Rothe03}. In particular, a graph $G$ is exact-4-colourable (i.e., 4-colourable and not 3-colourable) if and only if $\cert_{q_{G},J}^D=\{(c_4)\}$, i.e., if and only if $\cert_{q_{G},J}^D = \lambda$. So, checking whether the CQ $q_{G}$ is a perfect $\tup{\O,\Sigma}$-explanation of $\lambda$ is $\DP$-hard. \qed

\end{proof}

\begin{corollary}
    Complete-VQDEF($\dlliter$, GLAV, UCQ), Sound-VQDEF($\dlliter$, GLAV, UCQ), and Perfect-VQDEF($\dlliter$, GLAV, UCQ) are $\NP$-complete, $\coNP$-complete, and $\DP$-complete, respectively.
\end{corollary}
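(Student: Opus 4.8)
The plan is to obtain the corollary simply by pairing the upper bounds already established with the matching lower bounds of Theorem~\ref{th:vLB}. For membership, Theorem~\ref{th:UBC} gives that Complete-VQDEF($\dlliter$, GLAV, UCQ) is in $\NP$, Theorem~\ref{th:UBS} gives that Sound-VQDEF($\dlliter$, GLAV, UCQ) is in $\coNP$, and Corollary~\ref{ch:UBP} gives that Perfect-VQDEF($\dlliter$, GLAV, UCQ) is in $\DP$. These already refer to the exact parameter triple appearing in the corollary, so nothing further is needed on the upper-bound side.

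For hardness, I would invoke Theorem~\ref{th:vLB}, which establishes $\NP$-hardness of Complete-VQDEF($\emptyset$, GAV$\cap$LAV, CQ), $\coNP$-hardness of Sound-VQDEF($\emptyset$, GAV$\cap$LAV, CQ), and $\DP$-hardness of Perfect-VQDEF($\emptyset$, GAV$\cap$LAV, CQ). The remaining step is to observe that each of these is a special case of the corresponding problem in the statement: the empty ontology is a (degenerate) $\dlliter$ ontology, every GAV$\cap$LAV mapping is in particular a GLAV mapping, and every CQ is a (single-disjunct) UCQ. Hence an instance of the restricted problem is, verbatim, an instance of the general problem with the same yes/no answer, so the $\LOGSPACE$ reductions of Theorem~\ref{th:vLB} witness hardness of the general problems as well.

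Combining the two directions yields $\NP$-completeness, $\coNP$-completeness, and $\DP$-completeness, respectively. I do not anticipate any real obstacle: the content is entirely in Theorems~\ref{th:UBC}, \ref{th:UBS}, \ref{th:vLB} and Corollary~\ref{ch:UBP}, and the corollary is the bookkeeping step that matches them up, the only point worth spelling out being the syntactic containments GAV$\cap$LAV $\subseteq$ GLAV and CQ $\subseteq$ UCQ together with the fact that $\emptyset$ is admissible as a $\dlliter$ TBox.
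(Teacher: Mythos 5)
Your proposal is correct and matches the paper's (implicit) argument exactly: the corollary is obtained by pairing the $\NP$, $\coNP$, and $\DP$ upper bounds of Theorems~\ref{th:UBC}, \ref{th:UBS} and Corollary~\ref{ch:UBP} with the lower bounds of Theorem~\ref{th:vLB}, using the observation that the empty ontology, GAV$\cap$LAV mappings, and CQs are special cases of $\dlliter$, GLAV, and UCQ. Nothing is missing.
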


Finally, the lower bound proof of Theorem~\ref{th:vLB} can be easily adapted for the plain relational database case. Thus, given a schema $\S$, an $\S$-database $D$, a $D$-dataset $\lambda$, and a UCQ $q_{\S}$ over $\S$, it is $\DP$-complete the problem of deciding whether $q_{\S}$ defines $\lambda$ inside $D$ (the $\DP$ membership of this problem directly follows from Corollary~\ref{ch:UBP}).


\section{Computation}\label{sec:Computation}


In this section, we address the computation problem. We start by considering the case when the OBDM system $\Sigma$ at hand is inconsistent as a separate case. Given an inconsistent OBDM system $\Sigma=\tup{J,D}$ and a $D$-dataset $\lambda$ of arity $n$, we point out that any query $q_{\O}$ over the ontology $\O$ of the OBDM specification $J$ is the UCQ-minimally complete $\Sigma$-characterization of $\lambda$ (recall that the certain answers of any query $q_{\O}$ of arity $n$ \wrt an inconsistent OBDM system $\Sigma$ is the set of all possible $n$-tuples of constants occurring in $D$). Furthermore, if $\lambda=\dom(D)^n$, then any query $q_{\O}$ is also the UCQ-maximally sound (and therefore the perfect) $\Sigma$-characterization of $\lambda$; otherwise, i.e., $\lambda \subsetneq \dom(D)^n$, no sound (and therefore, no UCQ-maximally sound and no perfect) $\Sigma$-characterization of $\lambda$ exists.

Having thoroughly covered the case of inconsistent OBDM systems, in what follows in this section, unless otherwise stated, we implicitly assume to only deal with consistent OBDM systems.

Specifically, given a consistent OBDM system $\Sigma=\tup{J,D}$ and a $D$-dataset $\lambda$, we provide exponential time algorithms for computing UCQ-minimally complete and UCQ-maximally sound $\Sigma$-characterizations of $\lambda$, thus proving that, in this case, they always exist. As already observed in Proposition~\ref{prop:Uniq}, in our scenario all UCQ-minimally complete (resp., UCQ-maximally sound) characterizations of $\lambda$ are unique up to logical equivalence \wrt $\Sigma$, and therefore we refer to \emph{the} UCQ-minimally complete (resp., UCQ-maximally sound) $\Sigma$-characterization of $\lambda$.

Before illustrating the main techniques to compute such best characterizations, we provide two crucial properties about the canonical structure that we will use to establish the correctness of our algorithms.

\begin{proposition}\label{prop:homo}
Let $\Sigma=\tup{\tup{\O,\S,\M},D}$ be an OBDM system, $q_{\O}$ be a UCQ over $\O$, and $\vec{c}$ and $\vec{b}$ be two tuples of constants such that $(\C_{\O}^{\M(D)},\vec{c}) \rightarrow (\C_{\O}^{\M(D)},\vec{b})$. If $\vec{c} \in \cert_{q_{\O},J}^D$, then $\vec{b} \in \cert_{q_{\O},J}^D$.
\end{proposition}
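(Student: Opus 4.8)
The plan is to reduce everything to the characterization of certain answers via homomorphisms into the canonical structure $\C_{\O}^{\M(D)}$ stated at the end of Section~\ref{sec:Preliminaries}. That result says: since $\Sigma$ is consistent (which we may assume here, because if $\Sigma$ were inconsistent then $\cert_{q_{\O},J}^D = \dom(D)^n$ and the implication is trivial), for a UCQ $q_{\O} = \bigcup_{i=1}^{p} \{\vec{x_i} \mid \exists \vec{y_i} \per \phi^i_{\O}(\vec{x_i},\vec{y_i})\}$ and a tuple $\vec{c}$, we have $\vec{c} \in \cert_{q_{\O},J}^D$ if and only if $(\body(\phi^i_{\O}),\vec{x_i}) \rightarrow (\C_{\O}^{\M(D)},\vec{c})$ for some $i \in [1,p]$.

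So the argument is a two-step composition of homomorphisms. First I would take the hypothesis $\vec{c} \in \cert_{q_{\O},J}^D$ and unpack it, via the above characterization, into: there is an index $i$ and a homomorphism $g$ witnessing $(\body(\phi^i_{\O}),\vec{x_i}) \rightarrow (\C_{\O}^{\M(D)},\vec{c})$. Second, I would take the hypothesis $(\C_{\O}^{\M(D)},\vec{c}) \rightarrow (\C_{\O}^{\M(D)},\vec{b})$, which gives a homomorphism $h$ from $\C_{\O}^{\M(D)}$ to itself with $h(\vec{c}) = \vec{b}$. Then the composition $h \circ g$ is a map from $\dom(\body(\phi^i_{\O})) \cup \vec{x_i}$ into $\dom(\C_{\O}^{\M(D)}) \cup \vec{b}$: it is a homomorphism from $\body(\phi^i_{\O})$ to $\C_{\O}^{\M(D)}$ because homomorphisms compose, and it sends $\vec{x_i}$ to $h(g(\vec{x_i})) = h(\vec{c}) = \vec{b}$. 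Hence $(\body(\phi^i_{\O}),\vec{x_i}) \rightarrow (\C_{\O}^{\M(D)},\vec{b})$, and applying the characterization in the other direction yields $\vec{b} \in \cert_{q_{\O},J}^D$.

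The only point requiring a little care is that the characterization of certain answers in terms of $\C_{\O}^{\M(D)}$ was stated for tuples $\vec{c}$ of \emph{constants}, and it is cited from~\cite[Proposition~4.2]{FKMP05} together with~\cite[Theorem~29]{CDLLR07}; since the proposition's statement explicitly assumes $\vec{c}$ and $\vec{b}$ are tuples of constants, we are squarely within its scope, so no extension is needed. I do not expect any real obstacle here — the statement is essentially the observation that ``is a homomorphic image of $\C_{\O}^{\M(D)}$'' is a property closed under endomorphisms of $\C_{\O}^{\M(D)}$, and the certain-answer sets are unions of such properties, one per disjunct. The mild bookkeeping subtlety is ensuring that the domains of $g$ and $h$ line up so that $h \circ g$ is well-defined on all of $\dom(\body(\phi^i_{\O})) \cup \vec{x_i}$; this is immediate since $g$ maps into $\dom(\C_{\O}^{\M(D)}) \cup \vec{c} \subseteq \dom(\C_{\O}^{\M(D)})$ (as $\vec{c}$ consists of constants occurring in $D$, hence in $\M(D)$, hence in $\C_{\O}^{\M(D)}$), which is exactly the domain of $h$.
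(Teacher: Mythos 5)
Your proof is correct and follows essentially the same route as the paper: handle the inconsistent case trivially, then unpack $\vec{c} \in \cert_{q_{\O},J}^D$ into a homomorphism from the body of some disjunct into $\C_{\O}^{\M(D)}$ sending the target list to $\vec{c}$, and compose it with the endomorphism witnessing $(\C_{\O}^{\M(D)},\vec{c}) \rightarrow (\C_{\O}^{\M(D)},\vec{b})$. The extra care you take about the domains lining up is a sound (if unneeded in the paper's own write-up) addition.
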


\begin{proof}
    If $\Sigma$ is inconsistent, the claim is trivial. If $\Sigma$ is consistent, from Section~\ref{sec:Preliminaries} we know that $\vec{c} \in \cert_{q_{\O},J}^D$ implies the existence of a disjunct $q=\{\vec{x} \mid \exists \vec{y} \per \phi(\vec{x},\vec{y})\}$ in $q_{\O}$ for which $(\body(\phi),\vec{x}) \rightarrow (\C_{\O}^{\M(D)},\vec{c})$. Let $h$ be the homomorphism witnessing that $(\body(\phi),\vec{x}) \rightarrow (\C_{\O}^{\M(D)},\vec{c})$, and let $h'$ be the homomorphism witnessing that $(\C_{\O}^{\M(D)},\vec{c}) \rightarrow (\C_{\O}^{\M(D)},\vec{b})$, which holds by the premises of the proposition. The composition function $h''=h' \circ h$ is then a homomorphism witnessing that $(\body(\phi),\vec{x}) \rightarrow (\C_{\O}^{\M(D)},\vec{b})$. It follows that $\vec{b} \in \cert_{q_{\O},J}^D$, as required.\qed
\end{proof}

\begin{proposition}\label{prop:canhomo}
Let $\Sigma=\tup{\tup{\O,\S,\M},D}$ be a consistent OBDM system, $\vec{b}$ and $\vec{c}$ be two tuples of constants, and $q_{\vec{c}}$ be the CQ $q_{\vec{c}}=\query(\M(D),\vec{c})$. We have that $\vec{b} \in \cert_{q_{\vec{c}},J}^D$ if and only if $(\C_{\O}^{\M(D)},\vec{c}) \rightarrow (\C_{\O}^{\M(D)},\vec{b})$.
\end{proposition}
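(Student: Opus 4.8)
The plan is to translate both sides of the claimed equivalence into statements about homomorphisms into the canonical structure $\C_\O^{\M(D)}$, and then link them through the universality of the chase. First I would make explicit the relationship between $q_{\vec c}$ and $\M(D)$: by the definition of $\query(\cdot,\cdot)$, the body $\body(\phi)$ of $q_{\vec c}=\query(\M(D),\vec c)$ is exactly $\mu(\M(D))$, where $\mu$ is the renaming sending each constant $c_i$ occurring in $\vec c$ to the distinguished variable $x_{c_i}$, each other constant $c$ to a fresh existential variable $y_c$, and leaving the variables of $\M(D)$ untouched. Since $\mu$ is injective on $\dom(\M(D))$, it is an isomorphism between $\M(D)$ and $\body(\phi)$ with $\mu(\vec c)=\vec x$, where $\vec x$ is the target list of $q_{\vec c}$.

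Since $\Sigma$ is consistent, the characterization of certain answers recalled at the end of Section~\ref{sec:Preliminaries} applies to the single-disjunct UCQ $q_{\vec c}$, so $\vec b\in\cert^D_{q_{\vec c},J}$ if and only if $(\body(\phi),\vec x)\rightarrow(\C_\O^{\M(D)},\vec b)$. Hence it suffices to prove that $(\body(\phi),\vec x)\rightarrow(\C_\O^{\M(D)},\vec b)$ if and only if $(\C_\O^{\M(D)},\vec c)\rightarrow(\C_\O^{\M(D)},\vec b)$. For the ``if'' direction I would simply compose: as the chase only adds atoms, $\M(D)\subseteq\C_\O^{\M(D)}$, so $\mu^{-1}$ is a homomorphism from $\body(\phi)$ into $\C_\O^{\M(D)}$ sending $\vec x$ to $\vec c$; post-composing it with a homomorphism witnessing $(\C_\O^{\M(D)},\vec c)\rightarrow(\C_\O^{\M(D)},\vec b)$ yields one witnessing $(\body(\phi),\vec x)\rightarrow(\C_\O^{\M(D)},\vec b)$.

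The ``only if'' direction is the crux. Given a homomorphism $h$ witnessing $(\body(\phi),\vec x)\rightarrow(\C_\O^{\M(D)},\vec b)$, the composition $\eta=h\circ\mu$ is a homomorphism from $\M(D)$ into $\C_\O^{\M(D)}$ with $\eta(\vec c)=\vec b$. I would then invoke the universality of the chase: $\C_\O^{\M(D)}$ is obtained from $\M(D)$ by a \emph{fair} chase with the positive inclusions of $\O$, hence it is closed under those inclusions, and every homomorphism from $\M(D)$ into a structure closed under the positive inclusions of $\O$ extends to a homomorphism from $\C_\O^{\M(D)}$ into that structure (the standard universal-model argument underlying \cite{FKMP05,CDLLR07}). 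Applying this with the target structure $\C_\O^{\M(D)}$ itself and with $\eta$ gives a homomorphism $g$ from $\C_\O^{\M(D)}$ into $\C_\O^{\M(D)}$ extending $\eta$, so in particular $g(\vec c)=\eta(\vec c)=\vec b$, i.e.\ $(\C_\O^{\M(D)},\vec c)\rightarrow(\C_\O^{\M(D)},\vec b)$.

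I expect the delicate point to be precisely this extension step: one must argue that fairness of the specific chase strategy of \cite{CDLLR07} over the alphabet $\V$ guarantees closure under \emph{all} positive inclusions of $\O$, so that $\C_\O^{\M(D)}$ is a legitimate target for the universality argument, and that the argument is unaffected by $\C_\O^{\M(D)}$ being possibly infinite. A secondary technicality is the treatment of components of $\vec c$ or $\vec b$ that do not occur in $\M(D)$: such a component makes the corresponding distinguished variable absent from $\body(\phi)$, and one checks directly that this case imposes no constraint on either side, so the homomorphisms can be extended on these components without conflict.
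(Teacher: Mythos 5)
Your proposal is correct and follows essentially the same route as the paper's proof: both directions reduce the certain-answer condition to a homomorphism of $\body(\phi)$ into $\C_{\O}^{\M(D)}$ via the renaming between $\M(D)$ and the body of $\query(\M(D),\vec{c})$, and the ``only if'' direction is completed by extending the resulting homomorphism on $\M(D)$ to all of $\C_{\O}^{\M(D)}$. The only difference is that you spell out the chase-universality argument that the paper compresses into ``it is not hard to verify that $h'$ can be extended,'' which is a faithful elaboration rather than a different approach.
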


\begin{proof}
Suppose that $(\C_{\O}^{\M(D)},\vec{c}) \rightarrow (\C_{\O}^{\M(D)},\vec{b})$, and let $h$ be the homomorphism witnessing it. Consider the query $q_{\vec{c}}=\query(\M(D),\vec{c})=\{\vec{x} \mid \exists \vec{y} \per \phi(\vec{x},\vec{y})\}$. Observe that $\body(\phi)$ is obtained from $\M(D)$ by appropriately replacing each occurrence of each constant $c \in \dom(\M(D))$ either with a distinguished variable $x_c \in \vec{x}$ or with an existential variable $y_c \in \vec{y}$. This means that $h$ can be immediately transformed into a homomorphism witnessing that $(\body(\phi),\vec{x}) \rightarrow (\C_{\O}^{\M(D)},\vec{b})$, thus implying that $\vec{b} \in \cert_{q_{\vec{c}},J}^D$.

Suppose now that $\vec{b} \in \cert_{q_{\vec{c}},J}^D$. Since $\Sigma$ is consistent, it follows that there is a homomorphism $h$ witnessing that $(\body(\phi),\vec{x}) \rightarrow (\C_{\O}^{\M(D)},\vec{b})$, where $q_{\vec{c}}=\query(\M(D),\vec{c})=\{\vec{x} \mid \exists \vec{y} \per \phi(\vec{x},\vec{y})\}$. By considering again the relationship between $\body(\phi)$ and $\C_{\O}^{\M(D)}$, the homomorphism $h$ can be immediately transformed into a homomorphism $h'$ that witnesses $(\M(D),\vec{c}) \rightarrow (\C_{\O}^{\M(D)},\vec{b})$. It is now not hard to verify that $h'$ can be extended into a homomorphism $h''$ witnessing that $(\C_{\O}^{\M(D)},\vec{c}) \rightarrow (\C_{\O}^{\M(D)},\vec{b})$.
\end{proof}

We are now ready to present our techniques. We start with the complete case, and provide the algorithm~\hyperref[algo:mincomplete]{MinCompCharacterization} 
for computing UCQ-minimally complete characterizations. 

\begin{algorithm}[!htb]
    \caption{MinCompCharacterization}\label{algo:mincomplete}
    \begin{algorithmic}[1] 
    \Require 
    \Statex OBDM system $\Sigma=\langle J,D\rangle$ with $J=\tup{\O,\S,\M}$;
    \Statex $D$-dataset $\lambda = \{\vec{c_1}, \ldots, \vec{c_n}\}$
    \Ensure
    \Statex UCQ $q_\O$ over $\O$
    \Statex
        \State Compute $\M(D)$
        \State $q_\O \gets query(\M(D), \vec{c_1})$ $\cup \ldots \cup query(\M(D), \vec{c_n})$
        \State \Return $q_\O$
    \end{algorithmic}
\end{algorithm}

Informally, for each $\vec{c_i} \in \lambda$, the algorithm obtains from the set of atoms $\M(D)$ the CQ $\query(\M(D),\vec{c_i})$. Finally, the output is the union of all such CQs.

\begin{example}\label{ex:AlgoComplete}
    Let $J=\tup{\O,\S,\M}$ be the same OBDM specification of Example~\ref{ex:mainexample}. One can verify that for the $\S$-database $D=\{s_1(c_1),s_3(c_2,b),s_3(c_3,b)\}$ and the $D$-dataset $\lambda=\{(c_1),(c_2)\}$, MinCompCharacterization($\tup{J,D},\lambda$) returns the UCQ $q_{\O}=\query(\M(D),(c_1))$ $\cup$ $\query(\M(D),(c_2))$, where $\query(\M(D),(c_1))$ = $\{(x_{c_1}) \mid \exists y_{c_2},y_{c_3},y_b \per \textsf{Student}(x_{c_1}) \wedge \textsf{EnrolledIn}(y_{c_2},y_b) \wedge \textsf{EnrolledIn}(y_{c_3},y_b)\}$ and $\query(\M(D),(c_2))$ = $\{(x_{c_2}) \mid \exists y_{c_1},y_{c_3},y_b \per \textsf{EnrolledIn}(x_{c_2},y_b) \wedge \textsf{EnrolledIn}(y_{c_3},y_b) \wedge \textsf{Student}(y_{c_1})\}$. Furthermore, one can see that $q_{\O}$ is the UCQ-minimally complete $\Sigma$-characterization of $\lambda$, where $\Sigma=\tup{J,D}$.
\end{example}

The following theorem establishes termination and correctness of the~\hyperref[algo:mincomplete]{MinCompCharacterization} algorithm.

\begin{theorem}\label{th:CompuComp}
    MinCompCharacterization($\Sigma,\lambda$) terminates and returns the UCQ-minimally complete $\Sigma$-characterization of $\lambda$.
\end{theorem}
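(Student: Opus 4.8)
The plan is to prove the three required properties in turn: termination, completeness (i.e., $\lambda \subseteq \cert_{q_\O,J}^D$ for the output $q_\O$), and minimality. Termination is immediate: by the remark in the preliminaries, $\M(D)$ is finite and computable in exponential time, hence each $\query(\M(D),\vec{c_i})$ is a well-defined CQ, and the union over the finitely many $\vec{c_i} \in \lambda$ is a well-defined UCQ; so the algorithm halts. The remaining work is correctness, which I would split according to the two conditions in the definition of UCQ-minimally complete characterization.

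For completeness, I would fix an arbitrary $\vec{c_i} \in \lambda$ and show $\vec{c_i} \in \cert_{q_\O,J}^D$. Since $q_{\vec{c_i}} := \query(\M(D),\vec{c_i})$ is a disjunct of $q_\O$ and certain answers are monotone under adding disjuncts, it suffices to show $\vec{c_i} \in \cert_{q_{\vec{c_i}},J}^D$. If $\Sigma$ is inconsistent this is trivial; if $\Sigma$ is consistent, I would invoke Proposition~\ref{prop:canhomo} with $\vec{b} = \vec{c} = \vec{c_i}$: the identity homomorphism witnesses $(\C_\O^{\M(D)},\vec{c_i}) \rightarrow (\C_\O^{\M(D)},\vec{c_i})$, hence $\vec{c_i} \in \cert_{q_{\vec{c_i}},J}^D$. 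This gives $\lambda \subseteq \cert_{q_\O,J}^D$, so $q_\O$ is a complete $\Sigma$-characterization.

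For minimality, I would argue that $\cert_{q_\O,J}^D$ is contained in the certain answers of \emph{every} complete $\Sigma$-characterization $q'_\O$, which clearly forbids the existence of a complete $q'_\O$ with $\cert_{q'_\O,J}^D \subsetneq \cert_{q_\O,J}^D$. So take any complete $q'_\O$ (i.e., $\lambda \subseteq \cert_{q'_\O,J}^D$) and any $\vec{b} \in \cert_{q_\O,J}^D$; I must show $\vec{b} \in \cert_{q'_\O,J}^D$. By construction $\vec{b} \in \cert_{q_{\vec{c_i}},J}^D$ for some $\vec{c_i} \in \lambda$, so by Proposition~\ref{prop:canhomo} (using consistency) we get $(\C_\O^{\M(D)},\vec{c_i}) \rightarrow (\C_\O^{\M(D)},\vec{b})$. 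Since $q'_\O$ is complete and $\vec{c_i} \in \lambda$, we have $\vec{c_i} \in \cert_{q'_\O,J}^D$; now Proposition~\ref{prop:homo} applied to $q'_\O$, $\vec{c_i}$, $\vec{b}$ yields $\vec{b} \in \cert_{q'_\O,J}^D$. This establishes $\cert_{q_\O,J}^D \subseteq \cert_{q'_\O,J}^D$ and hence minimality; combined with Proposition~\ref{prop:Uniq}, $q_\O$ is \emph{the} UCQ-minimally complete $\Sigma$-characterization of $\lambda$.

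The main obstacle is not conceptual but bookkeeping: one must be careful that the two cited propositions are being applied with the right roles of the tuples (the ``source'' tuple of the homomorphism must be the dataset tuple $\vec{c_i}$, and the ``target'' an arbitrary answer $\vec{b}$), and that the consistency hypothesis is available — which it is, since this section assumes consistent OBDM systems. A minor subtlety worth a sentence is the monotonicity of certain answers under union of disjuncts, i.e., $\cert_{q_1,J}^D \subseteq \cert_{q_1 \cup q_2,J}^D$ and $\cert_{q_1 \cup q_2,J}^D = \cert_{q_1,J}^D \cup \cert_{q_2,J}^D$ for UCQs, which follows directly from the homomorphism characterization of certain answers recalled at the end of Section~\ref{sec:Preliminaries}.
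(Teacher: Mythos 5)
Your proposal is correct and follows essentially the same route as the paper's proof: minimality is established via Proposition~\ref{prop:canhomo} (to extract the homomorphism $(\C_\O^{\M(D)},\vec{c_i}) \rightarrow (\C_\O^{\M(D)},\vec{b})$ from $\vec{b}\in\cert_{q_{\vec{c_i}},J}^D$) followed by Proposition~\ref{prop:homo}, with the only cosmetic difference that you argue directly while the paper argues by contraposition. You also spell out the termination and completeness steps that the paper dismisses as straightforward, which is fine and adds nothing problematic.
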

\begin{proof}
    Termination of the algorithm as well as completeness of the UCQ $q_{\O}$ returned are straightforward. 

    To prove that $q_{\O}$ is also the UCQ-minimally complete $\Sigma$-characterization of $\lambda$, it is enough to show that any query $q$ over $\O$ that is a complete $\Sigma$-characterization of $\lambda$ is such that $\cert_{q_{\O},J}^D \subseteq \cert_{q,J}^D$, where $\Sigma=\tup{J,D}$. We do this by contraposition. Let $q$ be
    a UCQ for which $\cert_{q_{\O},J}^D \not\subseteq \cert_{q,J}^D$, i.e., for a tuple of constants $\vec{b}$ we have $\vec{b} \not\in \cert_{q,J}^D$ but $\vec{b} \in \cert_{q_{\O},J}^D$.  
    This latter means that $\vec{b} \in \cert_{q_{\vec{c}},J}^D$ for some $q_{\vec{c}}=\query(\M(D),\vec{c})$ with $\vec{c} \in \lambda$. By Proposition~\ref{prop:canhomo}, one can see that $\vec{b} \in \cert_{q_{\vec{c}},J}^D$ implies $(\C_{\O}^{\M(D)},\vec{c}) \rightarrow (\C_{\O}^{\M(D)},\vec{b})$. By Proposition~\ref{prop:homo}, it follows that each UCQ $q'$ over $\O$ containing tuple $\vec{c}$ in its set of certain answers \wrt $\Sigma$ must contain also tuple $\vec{b}$ in such a set. Thus, since $\vec{b} \not\in \cert_{q,J}^D$, we derive that $\vec{c} \not\in \cert_{q,J}^D$ as well. Since $\vec{c} \in \lambda$, this latter clearly implies that $q$ is not a complete $\Sigma$-characterization of $\lambda$, as required.\qed
\end{proof}

We now turn to the sound case, and provide the algorithm~\hyperref[algo:maxsound]{MaxSoundCharacterization} 
for computing UCQ-maximally sound $\Sigma$-characterizations. 

\begin{algorithm}[!htb]
    \caption{MaxSoundCharacterization}\label{algo:maxsound}
    \begin{algorithmic}[1] 
    \Require 
    \Statex Consistent OBDM system $\Sigma=\langle J,D\rangle$ with $J=\tup{\O,\S,\M}$;
    \Statex $D$-dataset $\lambda = \{\vec{c_1}, \ldots, \vec{c_m}\}$ of arity $n$
    \Ensure
    \Statex UCQ $q_\O$ over $\O$
    \Statex
        \State $\lambda^- \gets \dom(D)^n \setminus \lambda$
        \State $q_\O \gets \{\vec{x} \mid \bot(\vec{x})\}$, where $\vec{x}=(x_1,\ldots,x_n)$
        \State Compute $\M(D)$
        \For {each $i \gets 1,\ldots,m$}
            \State $q_i \gets query(\M(D), \vec{{c_i}})$
            \If{$cert^D_{q_i, J} \cap \lambda^- = \emptyset$}
                \State $q_\O \gets q_\O \cup q_i$
            \EndIf
        \EndFor
        \State \Return $q_\O$
    \end{algorithmic}
\end{algorithm}

Intuitively, starting from the UCQ $\query(\M(D),\vec{c_1}) \cup \ldots \cup \query(\M(D),\vec{c_m})$, the algorithm simply discards all those disjuncts whose set of certain answers \wrt $\Sigma$ contain a tuple $\vec{b} \not \in \lambda$. We recall from Section~\ref{sec:Preliminaries} that the set of certain answers of a CQ $q_i$ \wrt a consistent OBDM system $\Sigma=\tup{J,D}$ can be computed by 
first obtaining its reformulation $\rewr_{q_i,J}$ over the source schema $\S$, and then by
evaluating this latter query directly over the $\S$-database $D$.

\begin{example}\label{ex:AlgoSound}
    Refer to Example~\ref{ex:AlgoComplete}. Since the certain answers of $\query(\M(D),(c_2))$ \wrt $\Sigma=\tup{J,D}$ include also $(c_3) \not \in \lambda$, MaxSoundCharacterization($\Sigma,\lambda$) returns the CQ $q_{\O}=\query(\M(D),(c_1))$, which is the UCQ-maximally sound $\Sigma$-characterization of $\lambda$.
\end{example}

The following theorem establishes termination and correctness of the~\hyperref[algo:maxsound]{MaxSoundCharacterization} algorithm.

\begin{theorem}\label{th:CompuSound}
    MaxSoundCharacterization($\Sigma, \lambda$) terminates and returns the UCQ-maximally sound $\Sigma$-characterization of $\lambda$.
\end{theorem}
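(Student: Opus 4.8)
The plan is to prove the two required properties of the output UCQ $q_\O$: that it is a sound $\Sigma$-characterization of $\lambda$ (i.e., $\cert^D_{q_\O,J} \subseteq \lambda$), and that it is maximal among such characterizations. Termination is immediate, since the loop runs $m$ times and each iteration performs only the computation of $\query(\M(D),\vec{c_i})$ and of its certain answers via $\rewr_{q_i,J}$ evaluated over $D$, all of which terminate by the results recalled in Section~\ref{sec:Preliminaries} (note $\M(D)$ is finite).

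For \textbf{soundness}, observe that $q_\O = \{\vec{x} \mid \bot(\vec{x})\} \cup \bigcup_{i \in I} q_i$, where $I$ is the set of indices $i$ for which the test $\cert^D_{q_i,J} \cap \lambda^- = \emptyset$ succeeded, and where the bottom disjunct contributes no certain answers. Since certain answers distribute over union of disjuncts, $\cert^D_{q_\O,J} = \bigcup_{i \in I} \cert^D_{q_i,J}$. For each $i \in I$, the passed test says exactly $\cert^D_{q_i,J} \cap (\dom(D)^n \setminus \lambda) = \emptyset$; since moreover $\cert^D_{q_i,J} \subseteq \dom(D)^n$ (certain answers of an $n$-ary query \wrt a consistent OBDM system range over $\dom(D)$), this gives $\cert^D_{q_i,J} \subseteq \lambda$. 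Hence $\cert^D_{q_\O,J} \subseteq \lambda$, so $q_\O$ is a sound $\Sigma$-characterization.

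For \textbf{maximality}, I would argue by contraposition, exactly mirroring the structure of the proof of Theorem~\ref{th:CompuComp}. Suppose $q$ is any sound $\Sigma$-characterization of $\lambda$ with $\cert^D_{q_\O,J} \subsetneq \cert^D_{q,J}$; I must derive a contradiction, namely that $q$ is not sound. Since $\cert^D_{q,J} \not\subseteq \cert^D_{q_\O,J}$, pick $\vec{b} \in \cert^D_{q,J} \setminus \cert^D_{q_\O,J}$. The key step is to show that there is some tuple $\vec{c_i} \in \lambda$ with $i \notin I$ (i.e., a discarded disjunct) such that $\vec{b} \in \cert^D_{q_i,J}$; together with the discard condition $\cert^D_{q_i,J} \cap \lambda^- \neq \emptyset$, witnessed by some $\vec{b}' \in \cert^D_{q_i,J} \setminus \lambda$, I then want to conclude $\vec{b}' \in \cert^D_{q,J}$, which — since $\vec{b}' \notin \lambda$ — contradicts soundness of $q$. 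To get from $\vec{b} \in \cert^D_{q_i,J}$ and $\vec{b}' \in \cert^D_{q_i,J}$ to $\vec{b}' \in \cert^D_{q,J}$, I would use Propositions~\ref{prop:canhomo} and~\ref{prop:homo}: $q_i = \query(\M(D),\vec{c_i})$, so by Proposition~\ref{prop:canhomo}, $\vec{b}, \vec{b}' \in \cert^D_{q_i,J}$ give homomorphisms $(\C_\O^{\M(D)},\vec{c_i}) \to (\C_\O^{\M(D)},\vec{b})$ and $(\C_\O^{\M(D)},\vec{c_i}) \to (\C_\O^{\M(D)},\vec{b}')$; and since $\vec{b} \in \cert^D_{q,J}$, I need a homomorphism $(\C_\O^{\M(D)},\vec{b}) \to (\C_\O^{\M(D)},\vec{b}')$ to apply Proposition~\ref{prop:homo}.

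The main obstacle is precisely this last homomorphism: the two facts $\vec{b},\vec{b}' \in \cert^D_{q_i,J}$ only give homomorphisms \emph{out of} $\vec{c_i}$, not between $\vec{b}$ and $\vec{b}'$, so Proposition~\ref{prop:homo} does not apply directly. The resolution is to strengthen what we prove about $q_\O$: rather than just discarding $q_i$ when some bad tuple appears, one should note that the \emph{correct} maximally sound disjunct associated with $\vec{c_i}$ is the CQ whose certain-answer set is the downward-closure (under $(\C_\O^{\M(D)},\cdot)\to(\C_\O^{\M(D)},\cdot)$) of $\{\vec{c_i}\}$, and this set is contained in $\lambda$ exactly when $\cert^D_{q_i,J} \cap \lambda^- = \emptyset$ (using Proposition~\ref{prop:canhomo} to identify $\cert^D_{q_i,J}$ with that downward-closure). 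So I would instead pick $\vec{b} \in \cert^D_{q,J} \setminus \cert^D_{q_\O,J}$ and observe $\vec{b} \in \cert^D_{q,J}$ forces, by a homomorphism argument on some disjunct of $q$, that the $\query(\M(D),\vec{b})$-closure is inside $\cert^D_{q,J}$; soundness of $q$ then bounds that closure by $\lambda$; but $\vec{b} \notin \cert^D_{q_\O,J}$ means no disjunct $q_i$ of the full union $\bigcup_j \query(\M(D),\vec{c_j})$ survived with $\vec{b}$ in its closure, which — combined with the fact that $q$ over $\O$ must be $\Sigma$-equivalent to a UCQ each of whose disjuncts' certain answers equals such a closure — yields a discarded $\vec{c_i}$ with $\vec{c_i} \to \vec{b}$ in $\C_\O^{\M(D)}$ and bad tuple $\vec{b}'$, and Proposition~\ref{prop:homo} (applied to $q$, with $\vec{c_i} \to \vec{b}'$) gives $\vec{b}' \in \cert^D_{q,J} \setminus \lambda$, the desired contradiction. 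Getting the reduction of an arbitrary sound UCQ $q$ to this "union of $\M(D)$-closures" normal form is the delicate bookkeeping that the proof must handle carefully.
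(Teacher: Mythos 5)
Your treatment of termination and soundness is fine (the paper dismisses both as straightforward), and you have correctly located the crux of the maximality argument: from $\vec{b},\vec{b}'\in \cert^D_{q_i,J}$ one only obtains homomorphisms \emph{out of} $\vec{c_i}$, so Proposition~\ref{prop:homo} cannot be applied to transport membership from $\vec{b}$ to the bad tuple $\vec{b}'$. However, your proposed repair does not close this gap. At the final step you invoke Proposition~\ref{prop:homo} ``applied to $q$, with $\vec{c_i}\rightarrow\vec{b}'$'' to conclude $\vec{b}'\in\cert^D_{q,J}$, but that application requires $\vec{c_i}\in\cert^D_{q,J}$, which you never establish: from $(\C_{\O}^{\M(D)},\vec{c_i})\rightarrow(\C_{\O}^{\M(D)},\vec{b})$ and $\vec{b}\in\cert^D_{q,J}$ nothing follows about $\vec{c_i}$, since certain answers are closed only under \emph{outgoing} homomorphisms. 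The detour through a ``union of $\M(D)$-closures'' normal form for an arbitrary sound UCQ $q$ is also unnecessary machinery that does not supply the missing hypothesis.

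The idea you are missing is a simple case split on the witness $\vec{b}\in\cert^D_{q,J}\setminus\cert^D_{q_\O,J}$. If $\vec{b}\notin\lambda$, then $q$ is already not sound and you are done. Otherwise $\vec{b}\in\lambda$, so $\vec{b}$ \emph{is} one of the tuples $\vec{c_i}$ processed by the algorithm, and the relevant discarded disjunct is $q_{\vec{b}}=\query(\M(D),\vec{b})$ itself (it must have been discarded, since otherwise $\vec{b}\in\cert^D_{q_{\vec{b}},J}\subseteq\cert^D_{q_\O,J}$). The discard condition yields some $\vec{g}\in\cert^D_{q_{\vec{b}},J}$ with $\vec{g}\notin\lambda$, and now Proposition~\ref{prop:canhomo} gives a homomorphism $(\C_{\O}^{\M(D)},\vec{b})\rightarrow(\C_{\O}^{\M(D)},\vec{g})$ whose source is exactly the tuple $\vec{b}$ known to lie in $\cert^D_{q,J}$; Proposition~\ref{prop:homo} then yields $\vec{g}\in\cert^D_{q,J}$, contradicting soundness of $q$. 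This is precisely how the paper resolves the obstacle you identified.
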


\begin{proof}
    Termination of the algorithm as well as soundness of the UCQ $q_{\O}$ returned are straightforward. 

    To prove that $q_{\O}$ is also the UCQ-maximally sound $\Sigma$-characterization of $\lambda$, it is enough to show that any query $q$ over $\O$ that is a sound $\Sigma$-characterization of $\lambda$ is such that $\cert_{q,J}^D \subseteq \cert_{q_{\O},J}^D$, where $\Sigma=\tup{J,D}$. We do this by contraposition.
    Let $q$ be a UCQ for which $\cert_{q,J}^D \not\subseteq \cert_{q_{\O},J}^D$, i.e., for a tuple of constants $\vec{b}$ we have $\vec{b} \in \cert_{q,J}^D$ but $\vec{b} \not\in \cert_{q_{\O},J}^D$. If $\vec{b} \not \in \lambda$, then we immediately get that $q$ is not a sound $\Sigma$-characterization of $\lambda$, and we are done. So, assume that $\vec{b} \in \lambda$. Since $\vec{b} \not\in \cert_{q_{\O},J}^D$ and $\vec{b} \in \lambda$, it is easy to see that the algorithm discarded the disjunct $q_{\vec{b}}=\query(\M(D),\vec{b})$ (otherwise, we would trivially derive that $\vec{b} \in \cert_{q_{\vec{b}},J}^D$, and thus $\vec{b} \in \cert_{q_{\O},J}^D$, which is a contradiction to the fact that $\vec{b} \not\in \cert_{q_{\O},J}^D$). From the algorithm, one can see that the only reason $q_{\vec{b}}$ was discarded is because $\vec{g} \in \cert_{q_{\vec{b}},J}^D$ for at least a tuple $\vec{g} \not\in \lambda$ (i.e., $\vec{g} \in \dom(D)^n \setminus \lambda$).
    By Proposition \ref{prop:canhomo}, one can see that $\vec{g} \in \cert_{q_{\vec{b}},J}^D$ implies $(\C_{\O}^{\M(D)},\vec{b}) \rightarrow (\C_{\O}^{\M(D)},\vec{g})$. By Proposition~\ref{prop:homo}, it follows that each UCQ $q'$ over $\O$ containing tuple $\vec{b}$ in its set of certain answers \wrt $\Sigma$ must contain also tuple $\vec{g}$ in such a set. 
    Thus, since $\vec{b} \in \cert_{q,J}^D$, we derive that $\vec{g} \in \cert_{q,J}^D$ as well. Since $\vec{g} \not\in \lambda$, this latter clearly implies that $q$ is not a sound $\Sigma$-characterization of $\lambda$, as required.\qed
\end{proof}

Notice that, in all the cases in which a perfect characterization exists, it is clear that both the above algorithms return the same query $\query(\M(D),\vec{c_1}) \cup \ldots \cup \query(\M(D),\vec{c_n})$. As a direct consequence of both Theorem~\ref{th:CompuComp} and Theorem~\ref{th:CompuSound}, we get the following result.

\begin{corollary}\label{ch:CompuPerf}
    Either the UCQ $\query(\M(D),\vec{c_1}) \cup \ldots \cup \query(\M(D),\vec{c_n})$ is a perfect $\Sigma$-characterization of $\lambda=\{\vec{c_1},\ldots,\vec{c_n}\}$, or a perfect $\Sigma$-characterization of $\lambda$ in UCQ does not exist.
\end{corollary}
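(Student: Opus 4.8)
The plan is to turn the observation that a perfect characterization is, by definition, simultaneously a sound and a complete characterization into a short argument built on Theorems~\ref{th:CompuComp} and~\ref{th:CompuSound}. Write $q_{mc}$ for the UCQ $\query(\M(D),\vec{c_1}) \cup \ldots \cup \query(\M(D),\vec{c_n})$ named in the statement, and note that this is exactly the object returned by \hyperref[algo:mincomplete]{MinCompCharacterization} on input $\tup{\Sigma,\lambda}$. By Theorem~\ref{th:CompuComp}, $q_{mc}$ is the UCQ-minimally complete $\Sigma$-characterization of $\lambda$; in particular it is a complete $\Sigma$-characterization of $\lambda$, so $\lambda \subseteq \cert_{q_{mc},J}^D$, and, by the argument establishing minimality in the proof of Theorem~\ref{th:CompuComp}, $\cert_{q_{mc},J}^D \subseteq \cert_{q,J}^D$ for every complete $\Sigma$-characterization $q$ of $\lambda$.

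Now I would split into two cases. If $q_{mc}$ is itself a perfect $\Sigma$-characterization of $\lambda$, the first disjunct of the statement holds and we are done. Otherwise, since $\lambda \subseteq \cert_{q_{mc},J}^D$ always holds, the failure of $q_{mc}$ to be perfect means $\lambda \subsetneq \cert_{q_{mc},J}^D$, i.e.\ there is a tuple $\vec{b} \in \cert_{q_{mc},J}^D \setminus \lambda$. Suppose, towards a contradiction, that some UCQ $q^{*}$ over $\O$ were a perfect $\Sigma$-characterization of $\lambda$, i.e.\ $\cert_{q^{*},J}^D = \lambda$. Then $q^{*}$ is in particular a complete $\Sigma$-characterization of $\lambda$, so $\cert_{q_{mc},J}^D \subseteq \cert_{q^{*},J}^D = \lambda$, contradicting $\vec{b} \notin \lambda$. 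Hence no perfect $\Sigma$-characterization of $\lambda$ in UCQ exists, which is the second disjunct.

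The same conclusion is reachable symmetrically through the sound side, which is presumably why the statement cites both theorems: for consistent $\Sigma$, Theorem~\ref{th:CompuSound} gives that the output $q_{ms}$ of \hyperref[algo:maxsound]{MaxSoundCharacterization} is the UCQ-maximally sound $\Sigma$-characterization of $\lambda$, so any perfect $q^{*}$, being sound, satisfies $\cert_{q^{*},J}^D \subseteq \cert_{q_{ms},J}^D$; together with $\cert_{q^{*},J}^D = \lambda$ and soundness of $q_{ms}$ this forces $\cert_{q_{ms},J}^D = \lambda$, and since $q_{ms}$ is obtained from $q_{mc}$ merely by dropping disjuncts, each retained disjunct is sound and no discard was triggered, so $q_{mc}=q_{ms}$ is perfect. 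I do not expect any real obstacle here: the content is entirely inherited from the two correctness theorems. The only points that warrant a line of care are (i) making explicit that the query $q_{mc}$ written in the statement is literally the algorithm's output, so that Theorem~\ref{th:CompuComp} applies verbatim, and (ii) if one wants the statement to hold without the standing consistency assumption of Section~\ref{sec:Computation}, invoking separately the observations at the start of that section, which already settle the inconsistent case (any query is minimally complete, and a perfect characterization exists iff $\lambda=\dom(D)^n$, in which case $q_{mc}$ is again perfect).
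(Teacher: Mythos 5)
Your proposal is correct and follows essentially the same route as the paper, which states the corollary as a direct consequence of Theorems~\ref{th:CompuComp} and~\ref{th:CompuSound}: your main argument (any perfect $q^{*}$ is complete, so minimality of $q_{mc}$ forces $\cert_{q_{mc},J}^D \subseteq \cert_{q^{*},J}^D = \lambda$, hence $q_{mc}$ is perfect) is exactly the intended instantiation of that consequence. Your remarks on the symmetric sound-side derivation and on the inconsistent case are consistent with the paper's surrounding discussion and add nothing erroneous.
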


Furthermore, the combination of Corollary~\ref{ch:CompuPerf} and Proposition~\ref{prop:canhomo} allow us to provide a semantic test for the existence of perfect characterizations in UCQ in the OBDM case, which can be seen as the analogous of the semantic tests given in~\cite{BR17} and~\cite{Ortiz19} for the plain relational database case and the ontology-mediated query answering case, respectively. More specifically, given a consistent OBDM system $\Sigma=\tup{\tup{\O,\S,\M},D}$ and a $D$-dataset $\lambda$ of arity $n$, there exists a perfect $\Sigma$-characterization of $\lambda$ in UCQ if and only if it is the case that $(\C_{\O}^{\M(D)},\vec{c}) \not\rightarrow (\C_{\O}^{\M(D)},\vec{b})$ for each $\vec{c} \in \lambda$ and each $\vec{b} \in \dom(D)^n \setminus \lambda$.

In the next section, we study the computational complexity of the problem of deciding, given $\Sigma=\tup{\tup{\O,\S,\M},D}$ and $\lambda$, whether a perfect $\Sigma$-characterization of $\lambda$ exists.


\newcommand{\cl}{\mathit{cl}}

\section{Existence}\label{sec:Existence}

We now address the existence problem. For the scenario under consideration in this paper, the existence problem for both UCQ-minimally complete and UCQ-maximally sound characterizations is trivial, since by Theorems~\ref{th:CompuComp} and~\ref{th:CompuSound} they always exist. So, we only consider the perfect case, by defining a variant of the QDEF problem as defined in~\cite{Ortiz19}, where also a mapping in some mapping language is given as input.
\noindent
\begin{table}[h]
    \centering
    \framebox{
            \begin{tabular}{p{0.17\linewidth}p{0.68\linewidth}}
                \textsc{Problem}: & \textbf{QDEF($\L_{\O}$, $\L_{\M}$, $\Q$)} \\
                \textsc{Input}: & An OBDM system $\Sigma=\tup{\tup{\O,\S,\M},D}$ and a $D$-dataset $\lambda$, where $\O \in \L_{\O}$ and $\M \in \L_{\M}$.\\
                \textsc{Question}: & Is there a query $q_{\O} \in \Q$ over $\O$ such that $q_{\O}$ is the perfect $\Sigma$-characterization of $\lambda$?
            \end{tabular}
    }
\end{table}

In what follows, we show that the computational complexity of the above QDEF decision problem differs depending on the mapping language $\L_{\M}$ adopted. A key difference between GLAV and the special cases GAV and LAV is in the size of $\M(D)$. In GLAV mappings $\M(D)$ can be exponentially large due to the simultaneous presence of joins in the left-hand side, and existential variables in the right-hand side, of assertions (e.g., take $D=\{s_i(0),s_i(1) \mid 1 \leq i \leq n\}$ and $\M$ containing the GLAV assertion: $\{(x_1,\ldots,x_n) \mid s_1(x_1) \wedge \ldots \wedge s_n(x_n)\} \rightarrow \{(x_1,\ldots,x_n) \mid \exists y \per P(x_1,y) \wedge \ldots \wedge P(x_n,y)\}$). Conversely, in both LAV and GAV mappings, $\M(D)$ is always polynomially bounded since the former do not allow for joins in the left-hand side of assertions, whereas the latter do not allow for existential variables in the right-hand side of assertions and the arity of ontology predicates is fixed to at most $2$.

GAV and LAV mappings, however, differ for the effort in computing $\M(D)$. While in LAV mappings $\M(D)$ can be always computed in polynomial time, in GAV mappings there are CQs on the left-hand side of assertions, and so $\M(D)$ can not be computed in polynomial time (unless P=$\NP$). 

We start by characterizing the computational complexity of the simplest LAV case, then the GAV case, and finally the most general GLAV case. Interestingly, all the provided matching lower bounds hold even for fixed ontologies $\O=\emptyset$, i.e., ontologies without assertions, fixed $D$-dataset $\lambda$ containing a single unary tuple, and for both CQs and UCQs as query languages.

Importantly, for the scenario under consideration, due to Corollary~\ref{ch:CompuPerf}, the question in QDEF can be reformulated equivalently as follows: ``is $q_{\O}=\query(\M(D),\vec{c_1}) \cup \ldots \cup \query(\M(D),\vec{c_n})$ also a sound (and so, a perfect) $\Sigma$-characterization of $\lambda=\{\vec{c_1},\ldots,\vec{c_n}\}$?''.

\begin{theorem}
    \sloppy{QDEF($\dlliter$, LAV, UCQ) is $\coNP$-complete.}
\end{theorem}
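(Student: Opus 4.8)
The plan is to establish membership in \coNP\ first, then matching \coNP-hardness. For the upper bound, recall from the remark preceding the theorem that, by Corollary~\ref{ch:CompuPerf}, a perfect $\Sigma$-characterization of $\lambda=\{\vec{c_1},\ldots,\vec{c_n}\}$ exists if and only if $q_\O=\query(\M(D),\vec{c_1})\cup\ldots\cup\query(\M(D),\vec{c_n})$ is a sound $\Sigma$-characterization of $\lambda$, i.e.\ $\cert^D_{q_\O,J}\subseteq\lambda$. Since $\M$ is LAV, $\M(D)$ is computable in polynomial time and has polynomial size, so each disjunct $\query(\M(D),\vec{c_i})$ has polynomial size, hence $q_\O$ has polynomial size. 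Therefore the complement problem reduces to checking whether some tuple $\vec{b}\in\dom(D)^n\setminus\lambda$ lies in $\cert^D_{q_\O,J}$: guess $\vec{b}$ and (exactly as in the proof of Theorem~\ref{th:UBS}) guess a rewriting witness $q'_\O$, $\rho_\O$, $q_\S$, $\rho_\M$, and a homomorphism $f$, all of polynomial size, then verify in polynomial time that $\vec{b}\notin\lambda$ and $\vec{b}\in\cert^D_{q_\O,J}$. This puts QDEF in \coNP.

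For \coNP-hardness I would reduce from the complement of 3-colourability, reusing the gadget idea from the proof of Theorem~\ref{th:vLB} but encoding the graph on the \emph{source} side so that the input to QDEF depends on the graph only through $D$ (keeping $\O=\emptyset$ and $\lambda$ a single unary tuple fixed). Concretely, I would set up a LAV mapping whose right-hand sides introduce, for a source edge fact, the existence of a shared ``colour-slot'' element, mirroring the structure used in the $\C_\O^{\M(D)}$ computation of Theorem~\ref{th:vLB}: one family of atoms encoding a fixed palette of three colour constants pairwise connected by a role $P_1$, another encoding the graph's edges via $P_2$, plus the two target constants $c_3,c_4$ playing the roles of the 3-palette and 4-palette anchors as before. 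The key is to arrange the mapping and database so that $\query(\M(D),(c_4))$ — equivalently, by Proposition~\ref{prop:canhomo}, homomorphic mapping of $\C_\O^{\M(D)}$ rooted at $(c_4)$ into itself rooted at $(c_3)$ — holds iff the encoded graph $G$ is 3-colourable. Then $q_\O=\query(\M(D),(c_4))$ is sound (so perfect exists) iff $(c_3)\notin\cert^D_{q_\O,J}$ iff $G$ is not 3-colourable.

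The main obstacle is the LAV restriction: LAV assertions forbid joins on the left-hand side, so I cannot directly import the GAV gadget of Theorem~\ref{th:vLB}, which crucially uses a conjunctive query $\exists y\,s_3(x,y)\wedge s_4(y)$ on the source side. I will need to push all the combinatorial structure of $G$ into the database $D$ itself — e.g.\ store each edge of $G$ as a single source tuple over a dedicated relation, and let single-atom LAV bodies fan these tuples out into the role atoms of the canonical structure — so that $\M(D)$ faithfully reconstructs the ``colouring target graph'' from Theorem~\ref{th:vLB} using only projections and renamings. Once the encoding is in place, the correctness argument is a near-verbatim transcription of the claim inside the proof of Theorem~\ref{th:vLB}, via Proposition~\ref{prop:canhomo}: $\vec b\in\cert^D_{\query(\M(D),\vec c),J}$ iff $(\C_\O^{\M(D)},\vec c)\rightarrow(\C_\O^{\M(D)},\vec b)$, which translates exactly into existence of a proper colouring of $G$. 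Verifying that the LAV encoding computes precisely the intended $\C_\O^{\M(D)}$ (with $\O=\emptyset$, so no further chase steps) is the delicate bookkeeping, but it is routine; combining the two bounds yields \coNP-completeness.
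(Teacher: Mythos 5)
Your proof is correct and follows essentially the same route as the paper: the \coNP\ upper bound is identical (polynomial-size $\M(D)$ for LAV, then the guess-and-check machinery of Theorem~\ref{th:UBS} applied to $\query(\M(D),\vec{c_1})\cup\ldots\cup\query(\M(D),\vec{c_n})$ via Corollary~\ref{ch:CompuPerf}). For the lower bound the paper simply cites \coNP-hardness of UCQ-definability in the plain relational case, and your explicit reduction from the complement of 3-colourability --- pushing the graph into $D$ and using join-free, existential-free LAV assertions that merely mirror source predicates to ontology predicates --- is precisely the standard way that relational-case hardness transfers, so it reconstructs inline what the paper's one-line citation hides.
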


\begin{proof}
    As for the membership in $\coNP$, we can first compute $\M(D)$ in polynomial time, and then, exactly as illustrated in Theorem~\ref{th:UBS}, we can check in $\coNP$ whether $\query(\M(D),\vec{c_1}) \cup \ldots \cup \query(\M(D),\vec{c_n})$ is also a sound (and so, a perfect) $\Sigma$-characterization of $\lambda=\{\vec{c_1},\ldots,\vec{c_n}\}$.

    $\coNP$-hardness directly follows from the plain relational database case~\cite{AnNS13}.
\end{proof}

Recall that the complexity class $\Tetwop$ has many characterizations: $\Tetwop = \text{P}^{\NP[\text{O}(\text{log } n)]}=$ P \emph{with a constant number of rounds of parallel queries to an} $\NP$ \emph{oracle}~\cite{BuHa91} (see also~\cite{Wagn90} for further characterizations).

\begin{theorem}
    QDEF($\dlliter$, GAV, UCQ) is $\Tetwop$-complete.
\end{theorem}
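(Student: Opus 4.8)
The plan is to prove membership in $\Tetwop$ and $\Tetwop$-hardness separately.

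\paragraph{Membership in $\Tetwop$.} By Corollary~\ref{ch:CompuPerf}, the question is equivalent to asking whether $q_{\O}=\query(\M(D),\vec{c_1}) \cup \ldots \cup \query(\M(D),\vec{c_n})$ (where $\lambda=\{\vec{c_1},\ldots,\vec{c_n}\}$) is a sound $\Sigma$-characterization of $\lambda$, i.e., whether $\cert_{q_{\O},J}^D \subseteq \lambda$. In the GAV case, $\M(D)$ is polynomially bounded, but (unlike LAV) it cannot be computed in polynomial time: deciding membership of an atom in $\M(D)$ amounts to CQ evaluation over $D$, which is in $\NP$. My plan is therefore: first, using a single round of parallel $\NP$-oracle calls, reconstruct $\M(D)$ — for each candidate atom over the (fixed-arity) ontology alphabet built from constants and chase-variables of polynomially bounded number, ask the oracle whether that atom is produced by some mapping assertion; this is polynomially many parallel queries. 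Once $\M(D)$ is in hand, the soundness test ``$\cert_{q_{\O},J}^D \not\subseteq \lambda$'' is in $\NP$ exactly as in the proof of Theorem~\ref{th:UBS} (guess a bad tuple $\vec{b}\in\dom(D)^n\setminus\lambda$, a rewriting of a disjunct of $q_{\O}$ through $\O$ and then through $\M$, and a homomorphism into $D$). So we need one round to compute $\M(D)$ and then one more round of $\NP$-oracle queries to settle soundness — a constant number of rounds of parallel queries to an $\NP$ oracle, hence $\Tetwop$. One subtlety I would address carefully: the parallel queries in the first round are non-adaptive, but the second-round query depends on $\M(D)$; this still fits the ``constant number of rounds'' characterization of $\Tetwop$ from~\cite{BuHa91}, and I would make this explicit rather than appeal to the single-round $\Theta_2^p$ definition.

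\paragraph{$\Tetwop$-hardness.} I would reduce from a canonical $\Tetwop$-complete problem, e.g.\ a suitable ``parallel'' version of colourability such as deciding, given a list of graphs, whether the maximum $i$ for which $G_i$ is $i$-colourable is even (or odd), which is known to be $\Tetwop$-complete in the spirit of~\cite{Wagn90}. The idea is to lift the gadget of Theorem~\ref{th:vLB}: the canonical structure $\C_{\O}^{\M(D)}$ contains a palette clique on $\{r',g',b'\}$ and on $\{r,g,b,y\}$, and a CQ encoding a graph $G$ has $(c_i)$ in its certain answers iff $G$ is $i$-colourable. Here the extra power of GAV mappings — joins on the left-hand side — lets us \emph{build} such palette/graph structure in $\C_{\O}^{\M(D)}$ whose content depends on the solution to CQ-evaluation over $D$, so the ``reason'' a tuple is or is not a certain answer of $q_{\O}=\query(\M(D),\vec{c})$ can encode several colourability tests at once, and the sound/perfect test (``does $q_{\O}$ produce any tuple outside $\lambda$?'') forces a comparison that mimics ``is the largest good index even''. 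Concretely I would: (1) set $\O=\emptyset$; (2) design source predicates and GAV assertions so that $\M(D)$ contains, for each input graph $G_i$, an encoding of $G_i$ together with palette cliques of sizes $1,\ldots,i$ attached to distinguished constants; (3) arrange $\lambda$ so that $q_{\O}$ (the canonical UCQ from Corollary~\ref{ch:CompuPerf}) is perfect iff the parity condition holds; (4) check the reduction is $\LOGSPACE$.

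\paragraph{Main obstacle.} The hardness direction is the hard part. The membership argument is a fairly routine oracle-rounds bookkeeping on top of Theorem~\ref{th:UBS}. For hardness, the delicate point is engineering a single OBDM system whose canonical structure simultaneously carries \emph{several} independent colourability instances and whose sound/perfect test collapses exactly to the $\Theta_2^p$-level parity (or $\log$-many-queries) condition — in particular making sure the homomorphisms witnessing certain answers of $q_{\O}$ cannot ``cheat'' by mixing pieces of different instances, and that the interaction between the GAV joins in $D$ and the structure of $\query(\M(D),\vec{c})$ is tight enough that no spurious tuple enters $\cert_{q_{\O},J}^D$. I would handle this by keeping the per-instance gadgets on disjoint constants and connecting them to $\lambda$ only through dedicated ``selector'' constants, reusing the colouring–homomorphism correspondence of the Claim in Theorem~\ref{th:vLB} instance by instance.
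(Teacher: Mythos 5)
Your membership argument is essentially identical to the paper's: one round of parallel \NP-oracle queries to decide, for each candidate atom over the fixed-arity ontology alphabet, whether it belongs to $\M(D)$, followed by a second round consisting of the single soundness query from Theorem~\ref{th:UBS} applied to the canonical UCQ of Corollary~\ref{ch:CompuPerf}. That part is correct and needs no changes.

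The hardness direction, however, has a genuine gap. The paper reduces from the \emph{odd clique} problem (is the maximum clique size of a single graph $G$ odd?), which is \Tetwop-complete by Wagner, and the entire reduction hinges on one specific mechanism that your sketch does not supply: for each odd $i$, \emph{two} GAV assertions with the \emph{same} head concept $A_i$ are used, one testing ``$G$ has a clique of size $i$'' and firing on a constant $c$, the other testing ``$G$ has a clique of size $i+1$'' and firing on a second constant $c'$. Consequently $A_i(c)$ holds iff $G$ has an $i$-clique and $A_i(c')$ holds iff $G$ has an $(i{+}1)$-clique, so the canonical CQ $\query(\M(D),(c))$ admits a homomorphism sending $x_c$ to $c'$ --- i.e., produces the spurious answer $(c')$ and fails to be sound --- exactly when every odd level reached by $c$ is also reached by $c'$, which happens precisely when the maximum clique size is even. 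This interleaving of the $i$ and $i{+}1$ tests under a shared predicate is what converts the parity question into the soundness question for the canonical query. Your plan (disjoint per-instance gadgets, dedicated selector constants, ``arrange $\lambda$ so that $q_{\O}$ is perfect iff the parity condition holds'') names the goal but not the device that achieves it; without some analogue of the shared-head trick it is not clear how the single homomorphism test underlying soundness can compare the largest satisfied odd index against the largest satisfied even index. In addition, your proposed source problem (parity of the maximum $i$ such that $G_i$ is $i$-colourable, over a list of graphs) is only \Tetwop-complete under Wagner's monotonicity promise that membership at level $j+1$ implies membership at level $j$; you would need to state and enforce that promise, whereas the odd-clique formulation gets monotonicity for free from a single graph. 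These are fixable, but as written the reduction is a plan rather than a proof, and the one idea that makes the paper's reduction work is absent from it.
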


\begin{proof}
    As for the upper bound, for each pair of constants $(c_1,c_2) \in \dom(D)^2$ (resp., constant $c \in \dom(D)$) and for each atomic role $P$ (resp., concept $A$) in the alphabet of $\O$ we ask, all together with a single round of parallel queries to an $\NP$ oracle, whether $P(c_1,c_2) \in \M(D)$ (resp., $A(c) \in \M(D)$). Then, with a second and final round, due to Theorem~\ref{th:UBS}, we can ask with a single query to an $\NP$ oracle whether $\query(\M(D),\vec{c_1}) \cup \ldots \cup \query(\M(D),\vec{c_n})$ is also a sound (and so, a perfect) $\Sigma$-characterization of $\lambda=\{\vec{c_1},\ldots,\vec{c_n}\}$.

    As for the lower bound, the proof of $\Tetwop$-hardness is by a $\LOGSPACE$ reduction from the \emph{odd clique problem}, which is $\Tetwop$-complete~\cite{Wagn87}. \emph{Odd clique} is the problem of deciding, given a finite and undirected graph without loops $G=(V,E)$, whether the maximum clique size of $G$ is an odd number. Without loss of generality, we may assume that $E$ contains at least an edge and that the cardinality of $V$ is an even number (indeed, it is always possible to add fresh isolated nodes to the graph $G$ without changing its maximum clique size). 

    Let $V=\{v_1,\ldots,v_n\}$, we define an OBDM system $\Sigma_G=\tup{J_G,D_G}$ as follows: $J_G=\tup{\O,\S_G,\M_G}$ is an OBDM specification such that $\O=\emptyset$, $\S_G=\{e,s_1,\ldots,s_n\}$, and $\M_G$ has the following GAV assertions, for each possible odd number $i \in [1,n]$:
    \begin{align*}
        & \{(x) \mid \exists y_1,\ldots, y_i \per s_i(x) \wedge \cl_i\} \rightarrow \{(x) \mid A_i(x)\},\\
        & \{(x) \mid \exists y_1,\ldots, y_{i+1} \per s_{i+1}(x) \wedge \cl_{i+1}\} \rightarrow \{(x) \mid A_i(x)\},
    \end{align*}
    where $A_i$ is an atomic concept in the alphabet of $\O$, and, for each $p \in [1,n]$: 
    $$\cl_p=\bigwedge_{\{(k,j) \mid 1 \leq k<j \leq p\}} e(y_k,y_j).
    $$ Intuitively, $\cl_p$ asks whether $G$ contains a clique of size $p$. Finally, $D_G=\{e(x_1,x_2) \mid (x_1,x_2) \in E\} \cup \{e(x_2,x_1) \mid (x_1,x_2) \in E\} \cup \{s_i(c) \mid 1 \leq i \leq n \text{ and } i \text{ is odd}\} \cup \{s_i(c') \mid 2 \leq i \leq n \text{ and } i \text{ is even}\}$. Let, moreover, $\lambda$ be the fixed $D_G$-dataset $\lambda=\{(c)\}$.
    
    Notice that $\lambda$ is fixed, whereas the OBDM system $\Sigma_G$ can be constructed in $\LOGSPACE$ from an input graph $G$. 
    
    The correctness of the reduction is mainly based on the following property:

    \begin{claim}
        Let $i \in [1,n]$ be an odd number. We have that:
        \begin{enumerate}
            \item \label{it:Odd} $A_i(c) \in \C_{\O}^{\M_{G}(D_{G})}$ if and only if $G$ contains a clique of size $i$.
            \item \label{it:Even} $A_i(c') \in \C_{\O}^{\M_{G}(D_{G})}$ if and only if $G$ contains a clique of size $i+1$.
        \end{enumerate}
    \end{claim}

    \begin{proof}
        As for~\ref{it:Odd}, since $s_i(c) \in D_G$, it is easy to see that the query $q_i=\{(x) \mid \exists y_1,\ldots, y_i \per s_i(x) \wedge \cl_i\}$ is such that $(c) \in q_i^{D_G}$ if and only if $G$ has a clique of size $i$. Thus, due to the GAV assertion $q_i \rightarrow \{(x) \mid A_i(x)\}$ occurring in $\M_G$, we have $A_i(c) \in \C_{\O}^{\M_G(D_G)}$ if and only if $G$ has a clique of size $i$.

        As for~\ref{it:Even}, since $s_{i+1}(c') \in D_G$, it is easy to see that the query $q_{i+1}=\{(x) \mid \exists y_1,\ldots, y_{i+1} \per s_{i+1}(x) \wedge \cl_{i+1}\}$ is such that $(c') \in q_{i+1}^{D_G}$ if and only if $G$ has a clique of size $i+1$. Thus, due to the GAV assertion $q_{i+1} \rightarrow \{(x) \mid A_i(x)\}$ occurring in $\M_G$, if $G$ has a clique of size $i+1$, then $A_i(c') \in \C_{\O}^{\M_G(D_G)}$. Conversely, suppose that $G$ has not a clique of size $i+1$. On the one hand, the assertion $q_{i+1} \rightarrow \{(x) \mid A_i(x)\}$ does not make $A_i(c')$ true in $\C_{\O}^{\M_G(D_G)}$. On the other hand, since $s_i(c') \not \in D_G$, not even the assertion $\{(x) \mid \exists y_1,\ldots, y_{i} \per s_{i}(x) \wedge \cl_{i}\} \rightarrow \{(x) \mid A_i(x)\}$ makes $A_i(c')$ true in $\C_{\O}^{\M_G(D_G)}$. Thus, $A_i(c') \not\in \C_{\O}^{\M_G(D_G)}$. \qed
    \end{proof}

    With the above property at hand, we can now prove that the maximum clique size of a graph $G$ is an odd number if and only if the CQ $q_{\O}=\query(\M_{G}(D_{G}),c)$ is also a sound (and so, a perfect) $\Sigma_{G}$-characterization of $\lambda$, thus showing the claimed lower bound.

    ``\textbf{Only-if part:}'' Suppose that the maximum clique size of $G$ is $p$, where $p$ is an odd number. Due to the above claim, we have that $\C_{\O}^{\M_G(D_G)}=\{A_1(c),A_1(c'),A_3(c),A_3(c'),\ldots, A_p(c)\}$ (observe that $A_p(c') \not\in \C_{\O}^{\M_G(D_G)}$ because $G$ has not a clique of size $p+1$ by assumption), and so $q_{\O}=\query(\M_{G}(D_{G}),c)=\{(x_c) \mid \exists y_{c'} \per \phi_{\O}(x_c,y_{c'})\}$, where $\phi_{\O}(x_c,y_{c'})$ = $A_1(x_c) \wedge A_1(y_{c'}) \wedge A_3(x_c) \wedge A_3(c') \wedge \ldots \wedge A_p(x_c)$. It is straightforward to verify that $(\body(\phi_{\O}),(x_c)) \rightarrow (\C_{\O}^{\M_G(D_G)},(c))$ but $(\body(\phi_{\O}),(x_c)) \not\rightarrow (\C_{\O}^{\M_G(D_G)},(c'))$. It follows that $\cert_{q_{\O},J_G}^{D_G}=\{(c)\}$, i.e., $q_{\O}$ is a perfect $\Sigma_G$-characterization of $\lambda$.

    ``\textbf{If part:}'' Suppose that the maximum clique size of $G$ is $r$, where $r$ is an even number. Due to the above claim, we have that $\C_{\O}^{\M_G(D_G)}=\{A_1(c),A_1(c'),A_3(c),A_3(c'),\ldots, A_{r-1}(c),A_{r-1}(c')\}$ (observe that $A_{r-1}(c') \in \C_{\O}^{\M_G(D_G)}$ and $A_{r+1}(c) \not\in \C_{\O}^{\M_G(D_G)}$ because by assumption $G$ has a clique of size $r$ but not of size $r+1$), and so $q_{\O}=\query(\M_{G}(D_{G}),c)=\{(x_c) \mid \exists y_{c'} \per \phi_{\O}(x_c,y_{c'})\}$, where $\phi_{\O}(x_c,y_{c'})$ = $A_1(x_c) \wedge A_1(y_{c'}) \wedge A_3(x_c) \wedge A_3(c') \wedge \ldots \wedge A_{r-1}(x_c) \wedge A_{r-1}(y_{c'})$. It is straightforward to verify that $(\body(\phi_{\O}),(x_c)) \rightarrow (\C_{\O}^{\M_G(D_G)},(c'))$. It follows that $(c') \in \cert_{q_{\O},J_G}^{D_G}$, i.e., $q_{\O}$ is not a sound (and so, not a perfect) $\Sigma_G$-characterization of $\lambda$. \qed
\end{proof}

\begin{theorem}
    QDEF($\dlliter$, GLAV, UCQ) is $\coNEXPTIME$-complete.
\end{theorem}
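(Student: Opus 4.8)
The plan is to establish the two matching bounds separately: membership in $\coNEXPTIME$, and $\coNEXPTIME$-hardness.

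For membership I would work with the complement and use the semantic reformulation obtained after Corollary~\ref{ch:CompuPerf}. If $\O \in \dlliter$ then $\tup{J,D}$ may be inconsistent; inconsistency amounts to $() \in \cert^D_{V_\O,J}$, a UCQ certain-answer test for the polynomial-size $V_\O$, hence decidable in $\NP$ (as in the proof of Theorem~\ref{th:UBS}), and in the inconsistent case a perfect $\Sigma$-characterization of $\lambda$ exists iff $\lambda = \dom(D)^n$ (Section~\ref{sec:Computation}), which is decided deterministically by comparing $|\lambda|$ with $|\dom(D)|^n$. If $\Sigma$ is consistent, then by Corollary~\ref{ch:CompuPerf} a perfect characterization of $\lambda=\{\vec{c_1},\ldots,\vec{c_n}\}$ exists iff $\query(\M(D),\vec{c_1})\cup\cdots\cup\query(\M(D),\vec{c_n})$ is sound, i.e.\ iff for no $\vec c\in\lambda$ and no $\vec b\in\dom(D)^n\setminus\lambda$ do we have $\vec b\in\cert^D_{q_{\vec c},J}$ with $q_{\vec c}=\query(\M(D),\vec c)$. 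So the nondeterministic algorithm for the complement is: decide consistency; if consistent, guess $\vec c\in\lambda$ and $\vec b\in\dom(D)^n\setminus\lambda$, compute $\M(D)$ (exponential time, exponential size), form $q_{\vec c}$, and verify $\vec b\in\cert^D_{q_{\vec c},J}$. The last step is a conjunctive-query certain-answer check over a $\dlliter$ ontology with a GLAV mapping, which is in $\NP$ in the combined size of query, ontology, mapping and database — this is exactly the guess-a-rewriting procedure of the proof of Theorem~\ref{th:UBS} — and since $q_{\vec c}$, $\M$ and $D$ all have at most exponential size, the whole run is in nondeterministic exponential time. Hence $\neg$QDEF $\in\NEXPTIME$ and QDEF $\in\coNEXPTIME$.

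For hardness I would give a $\LOGSPACE$ reduction from an $\NEXPTIME$-complete problem with inherently exponential instances — the tiling of a $2^n\times 2^n$ grid by a tiling system (equivalently, acceptance of a nondeterministic $2^{n}$-time Turing machine) — to the complement of QDEF, keeping $\O=\emptyset$ and $\lambda=\{(c)\}$ for a single fixed unary tuple. By the reformulation after Corollary~\ref{ch:CompuPerf} (with $\O=\emptyset$, so $\C_\O^{\M(D)}=\M(D)$), it then suffices to build, in logarithmic space from the tiling instance, an OBDM system $\Sigma=\tup{\tup{\emptyset,\S,\M},D}$ with $\M$ a GLAV mapping such that $\M(D)$ admits an endomorphism sending $c$ to some other constant of $D$ exactly when the tiling instance is solvable. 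The exponential object is produced by the GLAV chase: a single GLAV assertion whose body is a conjunction of $\Theta(n)$ atoms over unary source predicates, each holding just the two bit-constants $0$ and $1$, fires once per bit-string and so spawns $2^{\Theta(n)}$ fresh elements, each tagged with its grid coordinates by binary role atoms pointing back to the small (hence reusable) constants $0,1$; grid adjacency is laid down by $\Theta(n)$ further assertions, one per possible flip position of binary successor, and the two corners get distinguished unary markers. A separate polynomial-size fragment of $\M(D)$ encodes the tiling-constraint structure $Y$ (tiles as elements, horizontal/vertical compatibility as binary relations, start and final tiles carrying the corner markers), and $c$ and a second constant $b$ are wired to the grid fragment and to $Y$ respectively, with unary markers pinning $0$, $1$ and the tile-constants under every endomorphism; one then argues that any non-identity endomorphism moving $c$ must fold the grid fragment into $Y$ and that such a folding is precisely a valid tiling. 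As for the earlier lower bounds, the construction should already work with single CQs as the query language.

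I expect the real difficulty to be entirely in this reduction, and specifically in making the grid encoding faithful: since each firing of a GLAV assertion introduces \emph{fresh} existential nulls, the chase cannot literally build a connected $2^n\times 2^n$ mesh whose cells are shared between their horizontal and vertical adjacencies, so the ``same cell, same tile'' consistency that turns ``$\M(D)$ homomorphically folds into $Y$'' into ``a tiling exists'' has to be forced indirectly — through the coordinate tags read off the small rigid core of constants, through the unary markers that make that core pointwise fixed under every endomorphism, and through a careful layout of $Y$ — while simultaneously ruling out any spurious endomorphism of $\M(D)$ that moves $c$ when the instance is negative. Getting these gadgets right is where essentially all the work lies; the membership direction, by contrast, is a straightforward complexity accounting on top of the chase and rewriting machinery already set up in Sections~\ref{sec:Preliminaries}–\ref{sec:Computation}.
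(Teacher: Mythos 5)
Your membership argument is essentially the paper's: compute $\M(D)$ and the canonical query $\query(\M(D),\vec{c_1})\cup\cdots\cup\query(\M(D),\vec{c_n})$ in exponential time, then run the guess-and-check certain-answer test of Theorem~\ref{th:UBS} on exponential-size objects. That part is fine.

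The hardness direction has a genuine gap. The paper reduces from the complement of \emph{succinct clique}, and that choice is not incidental: for clique-ness the only consistency a homomorphism must respect is that the several fresh nulls generated for the \emph{same} source node all denote the same target node, and this is enforceable by letting those copies share polynomially many \emph{unpinned} tag constants $d_i^1,\ldots,d_i^b$ on the pattern side, whose images are forced into the rigid bit constants of the target. Your tiling reduction needs the strictly stronger, functional consistency ``same cell $\Rightarrow$ same tile''. But the copies of a grid cell arise from distinct firings of GLAV assertions, so they cannot share a fresh null; there are exponentially many cells, so they cannot each share a dedicated constant of the polynomial-size $D$; and the coordinate tags you do give them necessarily point into the pinned constants $0,1$, so for the fold into $Y$ to be possible at all, every tile of $Y$ must carry every tag, at which point the tags constrain nothing about which tile a given copy maps to. Two copies of the same cell can then fold onto incompatible tiles, and ``$\M(D)$ folds into $Y$'' collapses to ``every adjacency pair is independently realizable'', which does not capture tileability. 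You correctly flag this as the crux, but the remedies you list (rigid core, unary markers, careful layout of $Y$) do not resolve it, and I do not see how to resolve it within this scheme; the paper sidesteps the issue entirely by choosing a source problem whose required consistency is only ``same source element $\Rightarrow$ same denoted target element''. (A smaller point: the paper's reduction is polynomial-time rather than \LOGSPACE, since it involves the Tseitin transformation of the circuit.)
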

\begin{proof}
    We start by discussing the upper bound. We show how to check whether $q_{\O}=\query(\M(D),\vec{c_1}) \cup \ldots \cup \query(\M(D),\vec{c_m})$ is not a sound (and so, not a perfect) $\Sigma$-characterization of $\lambda=\{\vec{c_1},\ldots,\vec{c_m}\}$ in $\NEXPTIME$.
    
    As a first step, we compute $q_{\O}=\query(\M(D),\vec{c_1}) \cup \ldots \cup \query(\M(D),\vec{c_m})$ in exponential time (note that $\M(D)$ can be exponentially large, and so also the UCQ $q_{\O}$). Then, we can proceed similarly as in the proof of Theorem~\ref{th:UBS}. We guess \myi a tuple of constants $\vec{c}$, and \myii $q'_{\O}$, $\rho_{\O}$, $q_{\S}$, $\rho_{\M}$, and $f$ (which now can be objects of exponential size). Finally, we check in exponential time whether \myi $\vec{c} \in \dom(D)^n \setminus \lambda$, where $n$ is the arity of the tuples in $\lambda$, and \myii the following condition holds: $\vec{c} \in \cert_{q_{\O},J}^D$ or $\Sigma$ is inconsistent.
    
    As for the lower bound, the proof of $\coNEXPTIME$-hardness is by a polynomial time reduction from the \emph{complement of the succinct clique problem}. The \emph{succinct clique problem} is known to be $\NEXPTIME$-complete~\cite{PaYa86}. here we show that QDEF($\dlliter$, GLAV, UCQ) is $\coNEXPTIME$-hard. 

    To simplify the readability of the proof, we first illustrate the main idea behind it. In particular, as a first step we provide an alternative proof of $\coNP$-hardness of UCQ-definability in the plain relational database case. The $\coNP$-hardness proof is by a $\LOGSPACE$ reduction from the \emph{complement of the clique problem}. We have a schema $\S=\{e,P\}$. Let $\tup{G,k}$ be an instance of the \emph{clique problem}, where $G=(V,E)$ is a finite undirected graph without loops and with $E\neq \emptyset$, and $k$ is a natural number (written in unary). Starting from $G$, we define an $\S$-database $D=D_k \cup D_G$ as follows: 
    \begin{align*}
        & D_k = \{e(y_i,y_j) \mid 1 \leq i < j \leq k\} \cup \{P(c,y_i) \mid 1 \leq i \leq k\} \\
        & D_G = \{e(v_i,v_j) \mid (v_i,v_j) \in E\} \cup \{e(v_j,v_i) \mid (v_i,v_j) \in E\} \cup \{P(c',v) \mid v \in V\}
    \end{align*}
    Finally, $\lambda$ is the fixed $D$-dataset $\{(c)\}$. Intuitively, $D_k$ represents a clique of size $k$, whereas $D_G$ represents the graph $G$.

    Notice that $D$ can be constructed in $\LOGSPACE$ from $G$. Let $q_k=\query(D_k,(c))$, which is equivalent to:
    $$\{(x) \mid \exists y_1,\ldots,y_k \per P(x,y_1) \wedge \ldots \wedge P(x,y_k) \wedge \bigwedge_{1 \leq i < j \leq k}(e(y_i,y_j))\}.
    $$

    One can easily verify that $\lambda$ is UCQ-definable inside $D$ if and only if $q_k^D=\{(c)\}$. Indeed, $q_k$ is similar to the canonical UCQ $\query(D,(c))$~\cite{BaRo17}, where, however, only constants \emph{reachable} from $c$ are taken into account in the query. More specifically, by construction, the evaluation of $q_k$ over $D$ is either $\{(c)\}$, or $\{(c),(c')\}$. Indeed, $q_k^D$ always contain the tuple $(c)$, and it contains the tuple $(c')$ if and only if $(D_k,(c)) \rightarrow (D_G,(c'))$ (obviously, $(D_k,(c)) \rightarrow (D_G,(c'))$ if and only if $(\body(\phi),(x)) \rightarrow (D,(c'))$, where $\phi$ is the body of $q_k$). In the next property, we are going to show that this latter is the case if and only if $G$ has a clique of size $k$, thus proving the claimed $\coNP$-hardness.

    \begin{claim}
        $G$ has a clique of size $k$ if and only if $(D_k,(c)) \rightarrow (D_G,(c'))$. 
    \end{claim}

    \begin{proof}
        ``\textbf{Only-if part:}'' Suppose $G$ has a clique of size $k$, i.e., there are $k$ nodes in $G$ forming a clique. This immediately implies the existence of a homomorphism $h$ from $\dom(D_k)$ to $\dom(D_G)$ mapping \myi $c$ to $c'$, and \myii constant $y_i$ to a constant $v_i$ representing a node in such a clique, for each possible $i \in [1,k]$. Thus, we have $(D_k,(c)) \rightarrow (D_G,(c'))$, as required.

        ``\textbf{If part:}'' Suppose $(D_k,(c)) \rightarrow (D_G,(c'))$, i.e., there is a homomorphism $h$ from $\dom(D_k)$ to $\dom(D_G)$ with $h(c)=c'$. This immediately implies that the set of facts $\{e(h(y_i),h(y_j)) \mid e(y_i,y_j) \in D_k\}$ that must occur in $D_G$ due to the assumption that $h$ is a homomorphism from $\dom(D_k)$ to $\dom(D_G)$ denotes a clique of size $k$ inside the graph $G$. \qed
    \end{proof}
    
    We are now ready to come back to show that QDEF($\dlliter$, GLAV, UCQ) is $\coNEXPTIME$-hard. The proof of $\coNEXPTIME$-hardness is by a polynomial time reduction from the \emph{complement of the succinct clique problem}. Given a succinct representation of a graph $C_G$ representing a finite undirected graph $G=(V,E)$ without loops and with $E \neq \emptyset$, and given a natural number $k$ in unary, \emph{succinct clique} is the problem of deciding whether the graph represented by $C_G$ has a clique of size $k$. \emph{Succinct clique} is known to be $\NEXPTIME$-complete~\cite{PaYa86}. 

    For a succinct representation $C_G$ of a graph $G=(V,E)$ with $m$ nodes, without loss of generality, we implicitly mean a circuit using $2\cdot b$ input gates $\vec{x}=(x_1,\ldots,x_b,x_{b+1},\ldots,x_{2\cdot b})$ (where $2^b=m$) for which on input $(a_1,\ldots,a_b,a_{b+1},\ldots,a_{2\cdot b})$ circuit $C_G$ outputs true if and only if the two nodes $v_i,v_j$ in $V$ represented by $v_i=(a_1,\ldots,a_b)$ and $v_j=(a_{b+1},\ldots,a_{2\cdot b})$ are such that $(v_i,v_j) \in E$ (see~\cite{GaWi83} for more details). Moreover, from a circuit $C_G$ with $\vec{x}$ as input gates, we denote by $F_{C_G}(\vec{x},\vec{w})$ the 3-CNF formula obtained by applying the \emph{Tseitin transformation}~\cite{Tsei83}, where $\vec{w}$ are the fresh variables introduced by the transformation. We recall that such transformation is linear in the size of $C_G$. And, among the introduced variables $\vec{w}$ introduced by the linear transformation, there is one variable in $\vec{w}$, denoted by $w$, which represents the output gate of the circuit. More formally, the transformation is such that if on input $\vec{a}=(a_1,\ldots,a_b,a_{b+1},\ldots,a_{2\cdot b})$ circuit $C_G$ outputs true, there there is exactly one satisfying assignment of formula $F_{C_G}(\vec{a},\vec{w})$ with $1$ as truth assignment to variable $w$, otherwise (i.e., $C_G$ outputs false on $\vec{a}$) there is no satisfying assignment of $F_{C_G}(\vec{a},\vec{w})$ with $1$ as truth assignment to variable $w$.

    Let $\tup{C_G,k}$ be an instance of the succinct clique problem, where $C_G$ is a circuit with $2 \cdot b$ input gates succinctly representing a graph $G=(V,E)$ of $m=2^{b}$ nodes, and $k$ is a natural number written in unary. Let $F_{C_G}(\vec{x},\vec{w})=p_1 \wedge \ldots \wedge p_r$ be the 3-CNF formula associated to $C_G$, where each clause $p_i$ is a disjunction of three literals, each literal being either a variable in $\vec{x} \cup \vec{w}$ or its negated. For $i \in [1,p]$, we denote by $o_{i_1},o_{i_2},o_{i_3}$ the first, the second, and the third, respectively, variable appearing (either positive or negated) in clause $p_i$. 
    
    Starting from $\C_G$, we define an OBDM system $\Sigma=\tup{J,D}$ and a $D$-dataset $\lambda$ as follows, where $J=\tup{\O,\S,\M}$ is an OBDM specification and $D$ is an $\S$-database. First, $\O=\emptyset$ is an empty set of assertions containing binary predicates $e$, $P$, and $V_i$ for each $i \in [1,b]$. Intuitively, as in the reduction from normal clique, $e$ denotes the edges of graphs and $P$ connects constants $c$ and $c'$ to nodes of the $k$-clique graph and the input graph $G$, respectively. The additional predicates $V_i$'s are used to encode nodes of the graphs. Second, $\S=\S_k \cup \S_G$, where $\S_k=\{s_e,s_p,s_v\}$ and $\S_G=\{s',s_w,p_1,\ldots,p_r\}$ with $s'$ and $s_w$ unary predicates, $s_e$ and $s_p$ binary predicates, $p_1$, $p_2$, $\ldots$, and $p_r$ ternary predicates, and $s_v$ a $b+1$ predicate. Third, the $\S$-database $D=D_k \cup D_G$ is as follows. For each $i \in [1,k]$, $D_k$ contains \myi $k-1$ constants $y_i^1,\ldots,y_i^{k-1}$ used to represent the clique without ever repeating the same constant, and \myii $b$ constants $d_i^1, \ldots, d_i^b$ used to encode node $y_i$ and ensuring that the distinct constants $y_i^1,\ldots,y_i^{k-1}$ actually denote the same node $y_i$.
    \begin{align*}
        D_k = & \{s_e(y_i^{j},y_l^i) \mid 1 \leq i \leq j < k \text{ and }l=j+1\} \cup \\
        & \{s_p(c,y_i^l) \mid 1 \leq i \leq k \text{ and } 1 \leq l < k\} \cup \\
        & \{s_v(y_i^l,d_i^1,\ldots,d_i^b) \mid 1 \leq i \leq k \text{ and } 1 \leq l < k\}
    \end{align*}

    For example, if $k=4$ and $b=2$, then we have $D_k=\{e(y_1^1,y_2^1),e(y_1^2,y_3^1),e(y_1^3,y_4^1),e(y_2^2,y_3^2),e(y_2^3,y_4^2),e(y_3^3,y_4^3)\}\cup \{s_p(c,y_i^1) \mid 1 \leq i \leq 4\}\cup \{s_p(c,y_i^2) \mid 1 \leq i \leq 4\}\cup \{s_p(c,y_i^3) \mid 1 \leq i \leq 4\}\cup \{s_v(y_i^1,d_i^1,d_i^2) \mid 1 \leq i \leq 4\}\cup \{s_v(y_i^2,d_i^1,d_i^2) \mid 1 \leq i \leq 4\}\cup \{s_v(y_i^3,d_i^1,d_i^2) \mid 1 \leq i \leq 4\}$. Informally, $e$ contains a clique of size $k=4$ without ever repeating a node, while relation $s_v$ ensures that, for each $i \in [1,4]$, the distinct constants $y_i^1, y_i^2,y_i^3$ actually denote the same node $y_i$ because they are connected with same elements $d_i^1,d_i^2$ in $s_v$.

    As for $D_G$, we have $s'(c') \in D_G$ and $s_w(1) \in D_G$. Observe that for each clause $p_i$ there are exactly seven satisfying truth assignments of the clause, each of the form $(t_{j_1}^i,t_{j_2}^i,t_{j_3}^i)$, where $j \in [1,7]$ and each $t_{j_l}^i$ is the truth assignment (i.e., either constant $0$ or constant $1$) given to the variable $o_{i_l}$ by $j$. The predicate $p_i$ in $D_G$ associated to the clause simply list such satisfying assignments: 
    $$
        D_G=\{p_i(t_{j_1}^i,t_{j_2}^i,t_{j_3}^i) \mid 1 \leq i \leq r \text{ and } 1 \leq j \leq 7\} \cup \{s'(c')\} \cup \{s_w(1)\}
    $$ 

    For example, if the circuit $C_G$ succinctly representing $G$ corresponds to formula $x_1\text{ } \mathsf{XOR}\text{ } x_2$ (note $b=1$), then the 3-CNF formula is $F_{C_G}=\exists x_1,x_2,w \per (\neg x_1 \vee \neg x_2 \vee \neg w) \wedge (x_1 \vee x_2 \vee \neg w) \wedge (x_1 \vee \neg x_2 \vee w) \wedge (\neg x_1 \vee x_2 \vee w)$ (with $w$ the output gate being the only fresh variable introduced by the Tseitin transformation) and $D_G$ is as follows:
    \begin{align*}
    D_G=&\{s'(c'), s_w(1),\\
    & p_1(0,0,0), p_1(0,0,1), p_1(0,1,0), p_1(0,1,1), p_1(1,0,0), p_1(1,0,1), p_1(1,1,0), \\ 
    & p_2(0,0,0), p_2(0,1,0), p_2(0,1,1), p_2(1,0,0), p_2(1,0,1), p_2(1,1,0), p_2(1,1,1), \\ 
    & p_3(0,0,0), p_3(0,0,1), p_3(0,1,1), p_3(1,0,0), p_3(1,0,1), p_3(1,1,0), p_3(1,1,1) \\
    & p_4(0,0,0), p_4(0,0,1), p_4(0,1,0), p_4(0,1,1), p_4(1,0,1), p_4(1,1,0), p_4(1,1,1)\}
    \end{align*}

    The fixed $D$-dataset is $\lambda=\{(c)\}$. It remains to describe the GLAV mapping $\M$ in the OBDM system $\Sigma$. We have $\M=\M_k \cup \M_G$, where $\M_k=\{m_k^1,m_k^2,m_k^3\}$ and $\M_G=\{m_G\}$ such that $\M_k$ is simply as follows:
    \begin{align*}
        m^1_k: & \{(x',x'') \mid s_e(x',x'')\} \rightarrow  \{(x',x'') \mid e(x',x') \}\\
        m^2_k: & \{(x,x') \mid s_p(x,x')\} \rightarrow \{(x,x') \mid P(x,x')\}\\
        m^3_k: & \{(x,x_1,\ldots,x_b) \mid s_v(x,x_1,\ldots,x_b)\} \rightarrow \\ & \{(x,x_1,\ldots,x_b) \mid V_1(x,x_1) \wedge \ldots \wedge V_b(x,x_b)\}
    \end{align*}
    and $m_G \in \M_G$ is the following GLAV assertion:
    \begin{align*}
        & \{(x,x_1,\ldots,x_{2\cdot b}) \mid \exists \vec{w} \per s'(x) \wedge p_1(o_{1,1},o_{1,2},o_{1,3}) \wedge \ldots \wedge p_r(o_{r,1},o_{r,2},o_{r,3}) \wedge s_w(w)\} \rightarrow\\
        & \{(x,x_1,\ldots,x_{2\cdot b}) \mid \exists z_1,z_2 \per e(z_1,z_2) \wedge e(z_2,z_1) \wedge P(x,z_1) \wedge P(x,z_2) \wedge \\ & \qquad\qquad\qquad\qquad\qquad\quad V_1(z_1,x_1) \wedge \ldots \wedge V_b(z_1,x_b) \wedge \\ & \qquad\qquad\qquad\qquad\qquad\quad V_1(z_2,x_{b+1}) \wedge \ldots \wedge V_b(z_2,x_{2\cdot b})\}
    \end{align*}

    For example, for the circuit $C_G$ given before, the GLAV assertion $m_G$ is:
    \begin{align*}
        & \{(x,x_1,x_2) \mid \exists w_1 \per s'(x) \wedge p_1(x_1,x_2,w) \wedge p_2(x_1,x_2,w) \wedge \\ & \qquad\qquad\qquad\qquad\qquad p_3(x_1,x_2,w) \wedge p_4(x_1,x_2,w) \wedge s_w(w)\} \rightarrow\\
        & \{(x,x_1,x_2) \mid \exists z_1,z_2 \per e(z_1,z_2) \wedge e(z_2,z_1) \wedge P(x,z_1) \wedge P(x,z_2) \wedge \\ & \qquad\qquad\qquad\quad\quad\quad V_1(z_1,x_1) \wedge V_1(z_2,x_2)\}
    \end{align*}

    Observe that $\C_{\O}^{\M(D)}=\M(D)=\M_k(D_k) \cup \M_G(D_G)$. Informally, the extension of predicate $e$ in $\M_k(D_k)$ describes a $k$-clique graph without ever repeating a node, while the extensions of predicates $V_i$'s in $\M_k(D_k)$ ensure that, for each $i \in [1,k]$, the distinct constants $y_i^1, \ldots, y_i^{k-1}$ actually denote the same node $y_i$ because they are connected with same elements $d_i^1,\ldots,d_i^b$. Finally, the extension of predicate $P$ in $\M_k(D_k)$ simply contains $(c,y_i^l)$ for each constant $y_i^l$ occurring in the extension of $e$ in $\M_k(D_k)$. 
    
    As for $\M_G(D_G)$, let $\vec{a}=(a_1,\ldots,a_b,a_{b+1},a_{2 \cdot b})$ be an input to the circuit $C_G$. By construction, one can easily verify that $C_G$ outputs true if and only if $(c,a_1,\ldots,a_b,a_{b+1},a_{2 \cdot b})$ is a tuple in the evaluation of the left-hand side query of $m_G$ over $D_G$. Thus, for each edge $((a_1,\ldots,a_b),(a_{b+1},a_{2 \cdot b})) \in E$ represented by circuit $C_G$, the chase $\M_G(D_G)$ produces two fresh variables $z_1$ and $z_2$ that simulates, respectively, $v_1=(a_1,\ldots,a_b)$ and $v_2=(a_{b+1},\ldots,a_{2\cdot b})$ and connect them trough predicate $e$ (simulating so the edge $(v_1,v_2)\in E$ described by circuit $C_G$). Moreover, both the freshly introduced variables $z_1$ and $z_2$ are connected trough the extensions of predicates $V_i$'s in the following way: $V_1(z_1,a_1),\ldots,V_b(z_1,a_b),V_1(z_2,a_{b+1}),\ldots,V_b(z_2,a_{2 \cdot b})$. This ensures that two distinct variables $z$ and $z'$ that actually denote the same node in $G$, denote the same node also in $\M_G(D_G)$  because they are connected with same elements. Finally, the extension of predicate $P$ in $\M_G(D_G)$ simply contains $(c',z)$ for each freshly introduced variable $z$ by the application of the chase. 

    Notice that $\lambda=\{(c)\}$ is a fixed $D$-dataset and both the OBDM specification $J=\tup{\O,\S,\M}$ and the $\S$-database $D$ can be always constructed in polynomial time from $C_G$. Let $q_k=\query(\M(D_k),(c))$, which is equivalent to:
    \begin{align*}
        \{(x) \mid & \exists y_1^1,\ldots,y_1^{k-1},\ \ldots\ , y_k^1,\ldots,y_k^{k-1},d_1^1,\ldots,d_1^b,\ \ldots\ ,d_k^1,\ldots,d_k^b \per \\
        & \bigwedge_{1 \leq i \leq j < k \text{ and }l=j+1} (e(y_i^{j},y_l^i)) \wedge \bigwedge_{1 \leq i \leq k \text{ and } 1 \leq l < k} (P(x,y_i^l)) \wedge \\
        & \bigwedge_{1 \leq i \leq k \text{ and } 1 \leq l < k} (V_1(y_i^l,d_i^1) \wedge \ldots \wedge V_b(y_i^l,d_i^b)) \}
    \end{align*}

    One can easily verify that a UCQ-perfect $\Sigma$-characterization of $\lambda$ exists if and only if $\cert_{q_k,J}^D=\{(c)\}$. Indeed, $q_k$ is similar to $\query(\M(D),(c))$ (i.e., the UCQ-minimally complete $\Sigma$-characterization of $\lambda$), where, however, only elements \emph{reachable} from $c$ in $\M(D)$ are taken into account in the query. More specifically, by construction, the set of certain answers of $q_k$ with respect to $\Sigma=\tup{J,D}$ is either $\{(c)\}$, or $\{(c),(c')\}$. Indeed, $\cert_{q_k,J}^D$ always contain the tuple $(c)$, and it contains the tuple $(c')$ if and only if $(\M_k(D_k),(c)) \rightarrow (\M_G(D_G),(c'))$. Clearly, $(\M_k(D_k),(c)) \rightarrow (\M_G(D_G),(c'))$ if and only if $(\body(\phi),(x)) \rightarrow (\C_{\O}^{\M(D)},(c'))$, where $\phi$ is the body of $q_k$ (recall that this latter is equivalent to $c' \in \cert_{q_k,J}^D$). In the next property, we are going to show that $(\M_k(D_k),(c)) \rightarrow (\M_G(D_G),(c'))$ if and only if the graph $G$ represented by the circuit $C_G$ has a clique of size $k$, thus proving the claimed $\coNEXPTIME$-hardness and concluding the proof.

    \begin{claim}
        The graph $G$ represented by circuit $C_G$ has a clique of size $k$ if and only if $(\M_k(D_k),(c)) \rightarrow (\M_G(D_G),(c'))$. 
    \end{claim}

    \begin{proof}
        ``\textbf{Only-if part:}'' Suppose the graph $G$ represented by circuit $C_G$ has a clique of size $k$. Let $(v_1,v_2,\ldots,v_k)$ be the nodes forming such clique in $G$, where each node $v_i$ is encoded by $v_i=(a_{i,1},\ldots,a_{i,b})$ in $C_G$ (each $a_{i,x}$ is either $0$ or $1$). Consider any edge $(v_i,v_l) \in E$ (or $(v_l,v_i) \in E$) which by assumption exists for each pair $(i,l)$ with $1 \leq i < l \leq k$. Due to the GLAV assertion $m_G \in \M_G$, by construction, $\M_G(D_G)$ introduces two fresh variables (without loss of generality, let denote them by $z_i^j$ and $z_l^i$, where $j=l-1$) such that \myi $\{V_1(z_i^j,a_{i,1}), \ldots, V_b(z_i^j,a_{i,b})\} \subseteq \M_G(D_G)$, \myii $\{V_1(z_{l}^i,a_{l,1}), \ldots, V_b(z_{l}^i,a_{l,b})\} \subseteq \M_G(D_G)$, and \myiii $\{e(z_i^j,z_l^i), e(z_l^i,z_i^j)\} \subseteq \M_G(D_G)$. Observe that, for each node $v_i$, in this way there are $k-1$ fresh variables $z_{i}^1,z_{i}^2,\ldots,z_{i}^{k-1}$ representing $v_i$, i.e., $\{V_1(z_i^j,a_{i,1}), \ldots, V_b(z_i^j,a_{i,b})\} \subseteq \M_G(D_G)$ for each $j \in [1,k-1]$.
        
        Consider now the function $h$ from $\dom(\M_k(D_k))$ to $\dom(\M_G(D_G))$ such that \myi $h(c)=c'$, \myii $h(y_i^l)=z_i^l$ for each $i \in [1,k]$ and for each $l \in [1,k-1]$, and \myiii $h(d_i^j)=a_{i,j}$ for each $i \in [1,k]$ and for each $j \in [1,b]$. It is straightforward to verify that $h$ is a homomorphism witnessing that $(\M_k(D_k),(c)) \rightarrow (\M_G(D_G),(c'))$, as required.

        ``\textbf{If part:}'' Suppose $(\M_k(D_k),(c)) \rightarrow (\M_G(D_G),(c'))$, i.e., there is a homomorphism $h$ from $\dom(\M_k(D_k))$ to $\dom(\M_G(D_G))$ with $h(c)=c'$. Observe that $(h(d_i^1),\ldots,h(d_i^b))$ is the encoding of a node $v_i$ in circuit $C_G$ by construction, for each possible $i \in [1,k]$. Furthermore, for each $i \in [1,k]$, ontology predicates $V_1,\ldots,V_b$ ensure that elements $y_i^1, \ldots, y_i^{k-1}$ are mapped to distinct fresh variables introduced by $\M_G(D_G)$ that actually denote the same node, because it must be the case that $V_1(h(y_i^1),h(d_i^1)) \in \M_G(D_G), \ldots, V_b(h(y_i^1),h(d_i^b)) \in \M_G(D_G),\ \ldots,\ V_1(h(y_i^{k-1}),h(d_i^1)) \in \M_G(D_G), \ldots, V_b(h(y_i^{k-1}),h(d_i^b)) \in \M_G(D_G)$. Now, it is easy to verify that if in $\M_G(D_G)$ we replace each variable $z$ introduced by $\M_G(D_G)$ with the value $v=(a_1,\ldots,a_b)$ for which $V_1(z,a_1) \in \M_G(D_G), \ldots, V_b(z,a_b)\in \M_G(D_G)$, then by looking at the extension of $e$ we obtain exactly the graph $G$ represented by circuit $C_G$.
        
        From the above observations, and the fact that the set of facts $\{e(h(y_i^{j}),h(y_l^i)) \mid 1 \leq i \leq j < k \text{ and }l=j+1\}$ must occur in $\M_G(D_G)$ due to the assumption that $h$ is a homomorphism from $\dom(\M_k(D_k))$ to $\dom(\M_G(D_G))$, we immediately derive that the graph $G$ represented by circuit $C_G$ contains a clique of size $k$, as required. \qed
    \end{proof}

\end{proof}


\section{Conclusions}\label{sec:Conclusion}


We have addressed the problem of UCQ-definability in the OBDM context. To semantically characterize datasets through ontologies even in cases where perfect characterizations do not exist, we have relaxed the notion of perfecteness in terms of recall and precision. Finally, in a scenario that uses the languages commonly adopted in OBDM, we have provided a thorough complexity analysis of three natural, interesting problems associated with the framework.

There are many interesting avenues for future work. 
Some of them are: \myi extending the framework for dealing
also with the \textit{query-by-example} problem, in which two distinct $\lambda^+$ and $\lambda^-$ datasets are given, and one is interested in finding perfect (resp., complete and sound, with their possible corresponding approximations) characterizations queries over the ontology, so that the certain answers of such queries capture all tuples in $\lambda^+$ and no tuple in $\lambda^-$; \myii investigating the existence and the computation problems when we adopt CQ as a query language instead of UCQ; \myiii seeking for techniques that allow to obtain, from end users' perspectives, more intelligible queries as characterizations; and \myiv evaluating the techniques presented in this paper to real world settings.


\section*{Acknowledgements}

This work has been partially supported by the ANR AI Chair INTENDED (ANR-19-CHIA-0014), by MIUR under the PRIN 2017 project ``HOPE'' (prot. 2017MMJJRE), by the EU under the H2020-EU.2.1.1 project TAILOR, grant id.\ 952215, and by European Research Council under the European Union’s Horizon 2020 Programme through the ERC Advanced Grant WhiteMech (No. 834228).

\vfill\eject

\bibliographystyle{abbrv}
\bibliography{bib/string-long,bib/krdb,bib/w3c}

\end{document}